\def\R{{\mathbb R}}
\DeclareMathOperator{\supp}{supp}
\newcommand{\bigom}{\Omega}
\renewcommand{\epsilon}{\varepsilon}
\newtheorem{definition}{Definition}
\newtheorem{proposition}{Proposition}
\newtheorem{remark}{Remark}
\DeclarePairedDelimiterX{\infdiv}[2]{(}{)}{%
  #1\;\delimsize\|\;#2%
}
\def\ddefloop#1{\ifx\ddefloop#1\else\ddef{#1}\expandafter\ddefloop\fi}
\def\ddef#1{\expandafter\def\csname bb#1\endcsname{\ensuremath{\mathbb{#1}}}}
\def\ddefloop#1{\ifx\ddefloop#1\else\ddef{#1}\expandafter\ddefloop\fi}
\def\ddef#1{\expandafter\def\csname b#1\endcsname{\ensuremath{\mathbf{#1}}}}
\def\ddef#1{\expandafter\def\csname sf#1\endcsname{\ensuremath{\mathsf{#1}}}}
\def\ddef#1{\expandafter\def\csname c#1\endcsname{\ensuremath{\mathcal{#1}}}}
\def\ddef#1{\expandafter\def\csname h#1\endcsname{\ensuremath{\widehat{#1}}}}
\def\ddef#1{\expandafter\def\csname hc#1\endcsname{\ensuremath{\widehat{\mathcal{#1}}}}}
\def\ddef#1{\expandafter\def\csname t#1\endcsname{\ensuremath{\widetilde{#1}}}}
\def\ddef#1{\expandafter\def\csname tc#1\endcsname{\ensuremath{\widetilde{\mathcal{#1}}}}}
\def\ddefloop#1{\ifx\ddefloop#1\else\ddef{#1}\expandafter\ddefloop\fi}
\def\ddef#1{\expandafter\def\csname scr#1\endcsname{\ensuremath{\mathscr{#1}}}}
\let\oldparagraph\paragraph
\renewcommand{\paragraph}[1]{\oldparagraph{#1}}
\renewcommand{\epsilon}{\varepsilon}
\newcommand{\eps}{\epsilon}
\newcommand{\ldef}{\vcentcolon=}
\renewcommand{\bigm}[1]{%
  \ifcsname fenced@\string#1\endcsname
    \expandafter\@firstoftwo
  \else
    \expandafter\@secondoftwo
  \fi
  {\expandafter\amsmath@bigm\csname fenced@\string#1\endcsname}%
  {\amsmath@bigm#1}%
}
\newcommand{\DeclareFence}[2]{\@namedef{fenced@\string#1}{#2}}
\let\save@mathaccent\mathaccent
\newcommand*\if@single[3]{%
  \setbox0\hbox{${\mathaccent"0362{#1}}^H$}%
  \setbox2\hbox{${\mathaccent"0362{\kern0pt#1}}^H$}%
  \ifdim\ht0=\ht2 #3\else #2\fi
  }
\newcommand*\rel@kern[1]{\kern#1\dimexpr\macc@kerna}
\newcommand*\widebar[1]{\@ifnextchar^{{\wide@bar{#1}{0}}}{\wide@bar{#1}{1}}}
\newcommand*\wide@bar[2]{\if@single{#1}{\wide@bar@{#1}{#2}{1}}{\wide@bar@{#1}{#2}{2}}}
\newcommand*\wide@bar@[3]{%
  \begingroup
  \def\mathaccent##1##2{%
    \let\mathaccent\save@mathaccent
    \if#32 \let\macc@nucleus\first@char \fi
    \setbox\z@\hbox{$\macc@style{\macc@nucleus}_{}$}%
    \setbox\tw@\hbox{$\macc@style{\macc@nucleus}{}_{}$}%
    \dimen@\wd\tw@
    \advance\dimen@-\wd\z@
    \divide\dimen@ 3
    \@tempdima\wd\tw@
    \advance\@tempdima-\scriptspace
    \divide\@tempdima 10
    \advance\dimen@-\@tempdima
    \ifdim\dimen@>\z@ \dimen@0pt\fi
    \rel@kern{0.6}\kern-\dimen@
    \if#31
      \overline{\rel@kern{-0.6}\kern\dimen@\macc@nucleus\rel@kern{0.4}\kern\dimen@}%
      \advance\dimen@0.4\dimexpr\macc@kerna
      \let\final@kern#2%
      \ifdim\dimen@<\z@ \let\final@kern1\fi
      \if\final@kern1 \kern-\dimen@\fi
    \else
      \overline{\rel@kern{-0.6}\kern\dimen@#1}%
    \fi
  }%
  \macc@depth\@ne
  \let\math@bgroup\@empty \let\math@egroup\macc@set@skewchar
  \mathsurround\z@ \frozen@everymath{\mathgroup\macc@group\relax}%
  \macc@set@skewchar\relax
  \let\mathaccentV\macc@nested@a
  \if#31
    \macc@nested@a\relax111{#1}%
  \else
    \def\gobble@till@marker##1\endmarker{}%
    \futurelet\first@char\gobble@till@marker#1\endmarker
    \ifcat\noexpand\first@char A\else
      \def\first@char{}%
    \fi
    \macc@nested@a\relax111{\first@char}%
  \fi
  \endgroup
}
\newcommand{\prob}{\mathbb{P}}
\newcommand{\me}{\mathbb{E}}
\newcommand{\mo}{\mathcal{O}}
\newcommand{\diag}{\mathsf{diag}}
\newcommand{\opt}{\mathsf{OPT}}
\newcommand{\dsone}{\mathds{1}}
\newcommand{\lbs}{\textup{lbs}}
\newtheorem{model}{Model}
\newcommand{\abs}[1]{\left\lvert#1\right\rvert}
\newcommand{\norm}[1]{\left\lVert#1\right\rVert_2}
\newcommand{\znorm}[1]{\left\lVert#1\right\rVert_0}
\newcommand{\onorm}[1]{\left\lVert#1\right\rVert_1}
\newcommand{\infnorm}[1]{\left\lVert#1\right\rVert_{\infty}}
\DeclareMathOperator*{\intdim}{int\,dim}
\title{Efficient Sparse PCA via Block-Diagonalization}
\date{}
\author{
Alberto Del Pia\\
{University of Wisconsin-Madison}\\
{\texttt{delpia@wisc.edu}}
\and
Dekun Zhou\\
{University of Wisconsin-Madison}\\
{\texttt{dzhou44@wisc.edu}}\\
\and
Yinglun Zhu\\
{University of California, Riverside}\\
{\texttt{yzhu@ucr.edu}}
}
\begin{document}

\maketitle

\begin{abstract}
Sparse Principal Component Analysis (Sparse PCA) is a pivotal tool in data analysis and dimensionality reduction. 
However, Sparse PCA is a challenging problem in both theory and practice: it is known to be NP-hard and current exact methods generally require exponential runtime. 
In this paper, we propose a novel framework to efficiently approximate Sparse PCA by (i) approximating the  general input covariance matrix with a re-sorted block-diagonal matrix, (ii) solving the Sparse PCA sub-problem in each block, and (iii) reconstructing the solution to the original problem.
Our framework is simple and powerful: it can leverage any off-the-shelf Sparse PCA algorithm and achieve significant computational speedups, with a minor additive error that is linear in the approximation error of the block-diagonal matrix.
Suppose $g(k, d)$ is the runtime of an algorithm (approximately) solving Sparse PCA in dimension $d$ and with sparsity constant $k$.
Our framework, when integrated with this algorithm, reduces the runtime to $\mathcal{O}\left(\frac{d}{d^\star} \cdot g(k, d^\star) + d^2\right)$, where $d^\star \leq d$ is the largest block size of the block-diagonal matrix.
For instance, integrating our framework with the Branch-and-Bound algorithm reduces the complexity from $g(k, d) = \mathcal{O}(k^3\cdot d^k)$ to $\mathcal{O}(k^3\cdot d \cdot (d^\star)^{k-1})$, demonstrating exponential speedups if $d^\star$ is small.
We perform large-scale evaluations on many real-world datasets: for exact Sparse PCA algorithm,  our method achieves an average speedup factor of 100.50, while maintaining an average approximation error of 0.61\%; for approximate Sparse PCA algorithm, our method achieves an average speedup factor of 6.00 and an average approximation error of -0.91\%, meaning that our method oftentimes finds better solutions.
\end{abstract}

\section{Introduction}
\label{sec:intro}

In this paper, we study the Sparse Principal Component Analysis (Sparse PCA) problem, a variant of the well-known Principal Component Analysis (PCA) problem. 
Similar to PCA, Sparse PCA involves finding a linear combination of $d$ features that explains most variance.
However, Sparse PCA distinguishes itself by requiring the use of only $k\ll d$ many features, thus integrating a sparsity constraint.
This constraint significantly enhances interpretability, which is essential in data analysis when dealing with a large number of features.
Sparse PCA has found widespread application across various domains, including text data analysis~\citep{zhang2011large}, cancer research~\citep{hsu2014sparse}, bioinformatics~\citep{ma2011principal}, and neuroscience~\citep{zhuang2020technical}.
For further reading, we refer interested readers to a comprehensive survey on Sparse PCA~\citep{zou2018selective}.

While being important and useful, Sparse PCA is a challenging problem---it is known to be NP-hard~\citep{magdon2017np}.
Solving Sparse PCA exactly, such as through the Branch-and-Bound algorithm \citep{berk2019certifiably}, requires worst-case exponential runtime. 
Moreover, while there are near-optimal approximation algorithms that exhibit faster performance empirically, they still require worst-case exponential runtime~\citep{bertsimas2022solving,li2020exact,cory2022sparse,dey2022solving,zou2006sparse,dey2022using}. 
This complexity impedes their applicability to large-scale datasets.
On the other hand, there are numerous polynomial-time approximation algorithms for Sparse PCA \citep{chowdhury2020approximation,li2020exact,papailiopoulos2013sparse,chan2015worst,del2022sparse,asteris2015sparse}. 
These algorithms typically face trade-offs between efficiency and solution quality. 
Some algorithms are fast but yield sub-optimal solutions, others provide high-quality solutions at the cost of greater computational complexity, and a few achieve both efficiency and accuracy but only under specific statistical assumptions.

In this paper, we introduce a novel framework designed to efficiently approximate Sparse PCA through matrix block-diagonalization. 
Our framework 
facilitates the use of off-the-shelf algorithms in a plug-and-play manner. 
Specifically, the framework comprises three principal steps:
(i) \emph{Matrix Preparation:} Given a general input covariance matrix to Sparse PCA, we generate a re-sorted block-diagonal matrix approximation.
This includes finding a denoised input matrix by thresholding, grouping non-zero entries into blocks, and slicing corresponding blocks in original input matrix.
(ii) \emph{Sub-Problem Solution:} Solve Sparse PCA sub-problems using any known algorithm in each block with a lower dimension, and find out the best solution among sub-problems;
(iii) \emph{Solution Reconstruction:} Reconstruct the best solution to the original Sparse PCA problem.
We illustrate these steps in \cref{fig:motivating_example}.
At a high level, our approach involves solving several sub-problems in smaller matrices instead of directly addressing Sparse PCA in a large input matrix (Step (ii)), which significantly reduces computational runtime. 
This is made possible by carefully constructing sub-problems (Step (i)) and efficiently reconstructing an approximate solution to the original problem (Step (iii)).

\begin{figure}[htbp]
\centering
\includegraphics[width=1\linewidth]{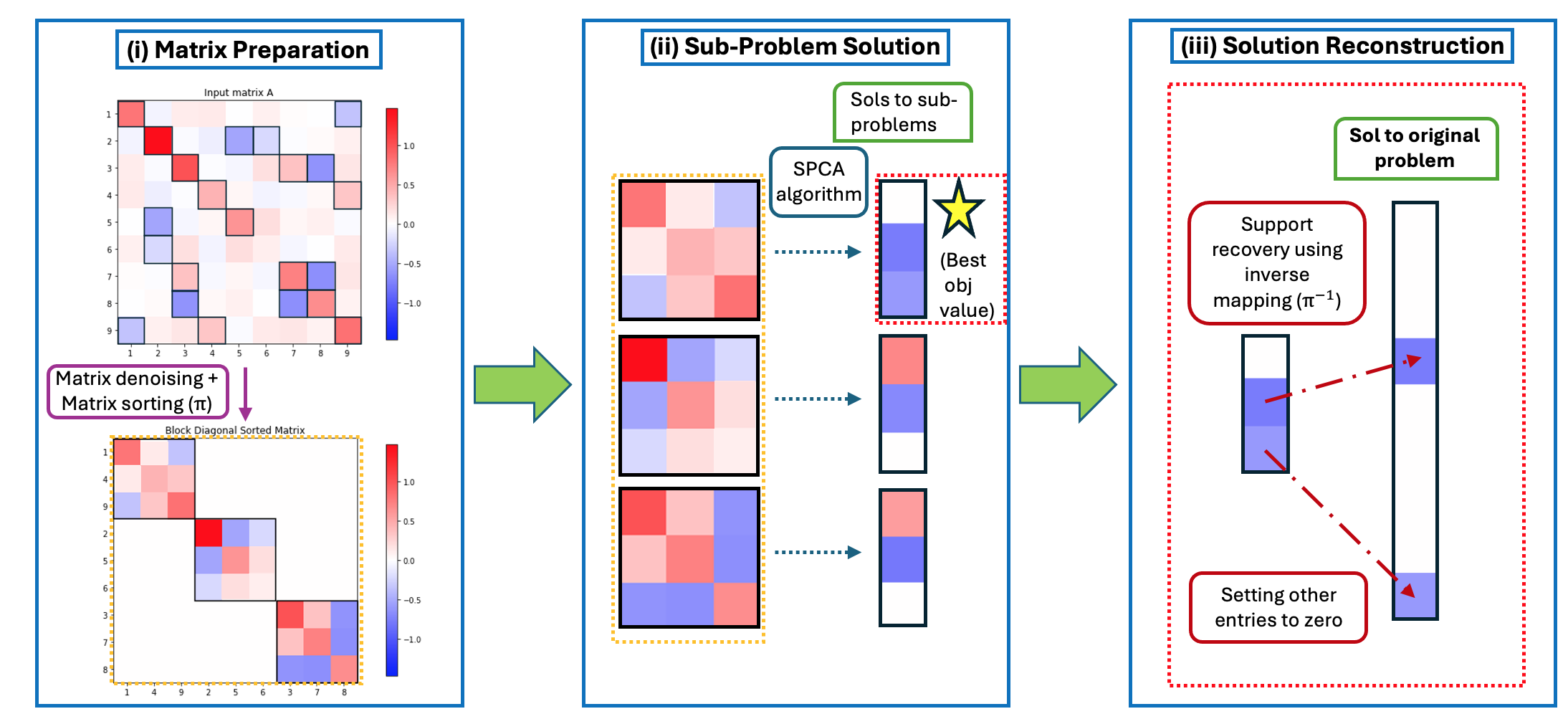}
\caption{Illustration of our proposed approach, given a $9\times 9$ covariance input matrix $A$. (i) Entries away from zero are highlighted in the upper matrix (original input $A$). 
Then group those entries in blocks, zero out outside entries, sort the matrix, and obtain the lower  block-diagonal approximation. 
Heatmaps present values of matrix entries. 
The axes are indices of $A$; (ii) Extract sub-matrices from the block-diagonal approximation, and solve sub-problems via a suitable Sparse PCA algorithm; (iii) Select the solution with the highest objective value obtained from the sub-problems. 
Construct a solution for the original Sparse PCA problem by mapping its non-zero entries to their original locations using the inverse mapping of the sorting process, and setting all other entries to zero. 
}
\label{fig:motivating_example}
\end{figure}

Our framework significantly speeds up the computation of Sparse PCA, with negligible approximation error.
We theoretically quantify the computational speedup and approximation error of our framework in \cref{sec:approach,sec:extensions}, and  conduct large scale empirical evaluations in \cref{sec:empirical}. 
Next, we illustrate the performance of our method via a concrete example.
Suppose one is given an input covariance matrix being a noisy sorted block-diagonal matrix with 10 blocks, each of size 100, and is asked to solve Sparse PCA with a sparsity constant $4$.
The Branch-and-Bound algorithm~\citep{berk2019certifiably} takes over 3600 seconds and obtains a solution with objective value 13.649.
Our framework integrated with Branch-and-Bound algorithm reduces runtime to 17 seconds, a speedup by a factor of 211, and obtains a solution with objective value 13.637, with an approximation error of 0.09\%.

\textbf{Our contributions.}
We summarize our contributions in the following:

\begin{itemize}[leftmargin=*]
    \item We propose a novel framework for approximately solving Sparse PCA via matrix block-diagonalization.
    This framework allows users to reuse any known algorithm designed for Sparse PCA in the literature in a very easy way, i.e., simply applying the known algorithm to the blocks of the approximate matrix, and accelerates the computation.
    To the best of our knowledge, our work is the first to apply block-diagonalization to solving Sparse PCA problem.

    \item We provide approximation guarantees for our framework. 
    We show that when integrated with an approximation algorithm, our method can generally obtain a better multiplicative factor, with an additional cost of an additive error that is linear in the approximation error of the block-diagonal approximate matrix. 
    Under certain assumptions, the additive error can also be improved.
    
    \item We also perform a time complexity analysis showing that for an (approximation) algorithm with runtime $g(k,d)$, our framework could reduce the runtime to $\mathcal{O}(\frac{d}{d^\star} \cdot g(k,d^\star) + d^2)$, where $d$ is the dimension of Sparse PCA input covariance matrix, $k$ is the sparsity constant, and $d^\star$ is the largest block size in the block-diagonal approximate matrix.

    \item We show that for a statistical model that is widely studied in robust statistics, there exists an efficient way to find a block-diagonal approximation matrix.
    Moreover, we provide a very simple way to extend to the setting where the statistical assumptions are dropped and where computational resources are taken into consideration.

    \item 
    We conduct extensive experiments to evaluate our framework, and our empirical study shows that (i) when integrated with Branch-and-Bound algorithm, our framework achieves an average speedup factor of 100.50, while maintaining an average approximation error of 0.61\%, and (ii) when integrated with Chan's algorithm~\citep{chan2015worst}, our framework achieves an average speedup factor of 6.00 and an average approximation error of -0.91\%, meaning that our method oftentimes finds better solutions.
\end{itemize}

\textbf{Our techniques.}
In this paper, we also introduce several novel techniques that may be of independent interest. 
We characterize a structural property of an optimal solution to Sparse PCA, when given a block-diagonal input matrix.
We establish that solving Sparse PCA for a block-diagonal matrix input is  equivalent to addressing individual Sparse PCA sub-problems within blocks, which consequently provides significant computational speedups. 
This result is non-trivial: while the support of optimal solutions might span multiple blocks, we prove that there always exists an optimal solution whose support is contained within a single block, ensuring the efficiency of our framework.
Additionally, leveraging the defined structure, we propose a novel binary-search based method to identify the best block-diagonal approximation, particularly when computational resources are limited.

\textbf{Paper organization.}
This paper is organized as follows.
In \cref{sec:setting}, we introduce Sparse PCA problem formally, and provide  motivation of our proposed framework.
In \cref{sec:approach}, we provide details of operational procedures of our proposed framework, and develop guarantees of approximation errors as well as runtime complexity.
In \cref{sec:extensions} we further extend our algorithms to learning problems either under (i) a statistical model, or under (ii) a model-free setting, and provide theoretical guarantees respectively.
In \cref{sec:empirical}, we run large-scale empirical evaluations and demonstrate the efficacy of our proposed framework when integrated with both exact and approximate Sparse PCA algorithms.
We defer discussion of additional related work, including existing block-diagonalization methods in the literature, detailed proofs, and additional empirical results to the appendix.

\section{Problem setting and motivation}
\label{sec:setting}

In this section, we define formally the problem of interest in this paper, and then we explain the motivation behind our framework.
We first define the Sparse PCA problem as follows: 
\begin{align}
	\label{prob SPCA} \tag{SPCA}
	\opt \ldef  \max \ x^\top A x \quad \text{s.t. } \norm{x} = 1, \ \ \znorm{x}\le k.
\end{align}
Here, a symmetric and positive semidefinite input matrix $A\in\R^{d\times d}$, and a positive integer $k\le d$ are given. 
$k$ is known as the sparsity constant, the upper bound on the number of nonzero entries in solution $x$. 

\textbf{Motivating idea in this paper.} 
The motivation for our method is illustrated through the following example, which demonstrates the efficiency of solving sub-problems: 
Suppose that a user is given an input block-diagonal covariance matrix $A$ of size $d$, comprising $d / d^\star$ many $d^\star \times d^\star$ blocks (assume that $d$ is divisible by $d^\star$), and is asked to solve \ref{prob SPCA} with sparsity constant $k$.
The user can simply apply an exact algorithm (e.g., Branch-and-Bound algorithm) to solve the large problem in time $\mo(k^3\cdot \binom{d}{k}) = \mo(k^3\cdot d^k)$.
The user could also solve $d / d^\star$ many sub-problems in each block, potentially obtaining an optimal solution (a fact we rigidly prove in \cref{sec:approximation_gaurantees}), in time $\mo((d / d^\star)\cdot k^3 \cdot (d^\star)^k)$.
This segmented approach yields a significant speedup factor of $\mo((d / d^\star)^{k-1})$.

However, not all input matrices $A$ are of the block-diagonal structure, even after permuting its rows and columns.
One goal of our work is to develop a reasonable way to construct a (sorted) block-diagonal matrix approximation $\widetilde A$ to $A$ such that they are close enough. 
In this paper, we are interested in finding the following \emph{$\epsilon$-matrix approximation to $A$}:
\begin{definition}[$\epsilon$-matrix approximation]
    \label{def:intrinsic_bd_approx}
    Given a matrix $A\in\R^{d\times d}$, denote by $A_{ij}$ the $(i,j)$-th entry of $A$.
     An $\epsilon$-matrix approximation of $A$ is defined as a $d\times d$ matrix $\widetilde A$ such that 
     
    \begin{align*}
        \infnorm{A - \widetilde A}\ldef \max_{i, j \in [d]} \abs{A_{ij} - \widetilde A_{ij}} \le \epsilon.
    \end{align*}
    Moreover, we define the set of all $\epsilon$-matrix approximations of $A$ to be $\mathcal{B}(A, \epsilon)$.
\end{definition}
In other words, $\widetilde A\in \mathcal{B}(A, \epsilon)$ if and only if $\widetilde A$ belongs to an $\ell_{\infty}$-ball centered at $A$ with radius $\epsilon$.
Our overarching strategy is that if the input matrix is not block-diagonal, we find a matrix approximation in the ball so that it could be resorted to be block-diagonal. 
We then solve the Sparse PCA problem using this resorted matrix and reconstruct the approximate solution to the original problem, taking advantage of the computational speedups via addressing smaller sub-problems.

\textbf{Additional notation.}
  We adopt non-asymptotic big-oh notation: For functions
	$f,g:\cZ\to\bbR_{+}$, we write $f=\mathcal{O}(g)$ (resp. $f=\bigom(g)$) if there exists a constant
	$C>0$ such that $f(z)\leq{}Cg(z)$ (resp. $f(z)\geq{}Cg(z)$)
        for all $z\in\cZ$. 
    For an integer $n\in\bbN$, we let $[n]$ denote the set $\{1,2, \dots,n\}$.  
	For a vector $z\in\bbR^{d}$, we use $z_i$ to denote its $i$-th entry. 
    We denote $\supp(z) \ldef \{i\in [d]:z_i\ne 0\}$ the \emph{support} of $z$.
    For $1\le q\le \infty$, we denote $\|z\|_q$ the $q$-norm of $z$, i.e., $\|z\|_q \ldef (\sum_{i= 1}^d \abs{z_i}^q)^{1/q}$, and $\infnorm{z} \ldef \max_{i\in [d]} \abs{z_i}$.
    For a matrix $A\in \R^{n\times m}$, we use $A_{ij}$ to denote its $(i, j)$-th entry.
    We define $\infnorm{A} \ldef \max_{i,j}|A_{ij}|$.
    For index sets $S\subseteq [n]$, $T\subseteq [m]$, we denote $A_{S,T}$ the $|S|\times |T|$ sub-matrix of $A$ with row index $S$ and column index $T$.
    For matrices $A_i \in \R^{d_i\times d_i}$ for $i = 1, 2, \ldots, p$, we denote $\diag(A_1, A_2, \ldots, A_p)$ the $(\sum_{i = 1}^p d_i) \times (\sum_{i = 1}^p d_i)$ block-diagonal matrix  with blocks $A_1, A_2, \ldots, A_p$.
    For a square matrix $B\in\R^{d\times d}$, we define the \emph{size} of $B$ to be $d$.

\section{Our approach}
\label{sec:approach}
In this section, we introduce our framework for solving \ref{prob SPCA} approximately via block-diagonalization technique.
We explain in detail the operational procedures of our framework in \cref{sec:algorithms}.
In~\cref{sec:speedup}, we characterize the approximation guarantees and computational speedups of our framework.
Note that, in this section, our proposed procedures require a predefined threshold $\epsilon > 0$. 
In \cref{sec:extensions}, we extend our algorithms to learning the threshold in a statistical model, as well as in a model-free setting.

\subsection{Algorithms}
\label{sec:algorithms}

In this section, we outline the operational procedure of our proposed framework for solving \ref{prob SPCA} approximately.
The procedure commences with the input matrix $A \in \mathbb{R}^{d \times d}$ and a predefined threshold value $\epsilon > 0$. 
Initially, a denoised matrix $A^\epsilon$ is derived through thresholding as described in \cref{alg:threshold procedure}.
Subsequently, several sub-matrices and their corresponding sets of indices are extracted from $A$ and $A^\epsilon$ by employing \cref{alg:bd procedure}, which groups index pair $(i, j)$ if $A^\epsilon_{ij} \neq 0$. 
This grouping mechanism can be efficiently implemented using a Depth-First Search approach, which iterates over each unvisited index $i$ and examines all non-zero $A_{ij}^\epsilon$, and then visit each corresponding index $j$ that is unvisited. 
The computational complexity of \cref{alg:bd procedure} is bounded by $\mathcal{O}(d^2)$.
Finally, an approximate solution to \ref{prob SPCA} is obtained by solving several Sparse PCA sub-problems in each block using a certain (approximation) algorithm~$\mathcal{A}$, mapping the solution back to the original problem, and finding out the best solution. 
The operational details are further elaborated in \cref{alg:operational}.\footnote{We note that on line 6 in \cref{alg:operational}, if $k$ is larger than the size of $\widetilde A_i$, a PCA problem is solved instead.}

\begin{algorithm}
\caption{Denoising procedure via thresholding}
\label{alg:threshold procedure}
\begin{algorithmic}[1] %
\State \textbf{Input:} Matrix $A \in \bbR^{d \times d}$, threshold $\eps > 0$
\State \textbf{Output:} Denoised matrix $A^\epsilon\in\R^{d\times d}$ 
\For{$(i, j) = (1, 1)$ \textbf{to} $(d, d)$}
    \State If $\abs{A_{ij}} >\epsilon$, set $A_{ij}^\epsilon \gets A_{ij}$, otherwise set $A_{ij}^\epsilon\gets 0$
\EndFor
\State {\bf Return} $A^\epsilon$
\end{algorithmic}
\end{algorithm}

\begin{algorithm}
\caption{Matrix block-diagonlization}
\label{alg:bd procedure}
\begin{algorithmic}[1] %
\State \textbf{Input:} Matrices $A$, $A^\epsilon\in\R^{d\times d}$ 
\State \textbf{Output:} Sub-matrices and their corresponding lists of indices
\For{each pair of indices $(i, j)\in [d]\times [d]$}
\State Group $i$ and $j$ together if $\abs{A^\epsilon_{ij}} > 0$
\EndFor
\State Obtain sets of indices $\{S_i\}_{i = 1}^p$ that are grouped together
\State {\bf Return} sub-matrices $A_{S_i, S_i}$ and the corresponding $S_i$, for $i \in [p]$
\end{algorithmic}
\end{algorithm}

\begin{algorithm}
\caption{Efficient Sparse PCA via block-diagonalization}
\label{alg:operational}
\begin{algorithmic}[1] %
\State \textbf{Input:} Matrix $A\in\R^{d\times d}$, positive integer $k\le d$, threshold $\epsilon > 0$, algorithm $\mathcal{A}$ for (approximately) solving \ref{prob SPCA}
\State \textbf{Output:} Approximate solution to \ref{prob SPCA}
\State Obtain denoised matrix $A^\epsilon$ using \cref{alg:threshold procedure} with input $(A, \epsilon)$
\State Obtain sub-matrices $\{\widetilde A_i\}_{i = 1}^p$ and corresponding index sets $\left\{S_i\right\}_{i = 1}^p$ via \cref{alg:bd procedure} with $(A, A^\epsilon)$
\For{$i = 1$ \textbf{to} $p$}
    \State Solve \ref{prob SPCA} approximately with input $(\widetilde A_i, k)$ using algorithm $\mathcal{A}$ and obtain solution $x_i$
    \State Construct $y_i \in \mathbb{R}^d$ by placing $(x_i)_j$ at $(y_i)_{S_i(j)}$ for $j$ in block indices $S_i$ and zero elsewhere
\EndFor
\State \textbf{Return} the best solution among $\{y_i\}_{i = 1}^p$ that gives the largest $y_i^\top A y_i$
\end{algorithmic}
\end{algorithm}

\subsection{Approximation guarantees and computational speedups}
\label{sec:speedup}
In this section, we provide analysis of approximation guarantees and speedups of our framework~\cref{alg:operational}.
Before showing approximation guarantees, we first define the \emph{$\epsilon$-intrinsic dimension of $A$}:
\begin{definition}[$\epsilon$-intrinsic dimension]
    \label{def:int_dim}
    Let $A \in \R^{d \times d}$. 
    The $\epsilon$-intrinsic dimension of $A$, denoted by $\intdim(A, \epsilon)$, is defined as:
    \begin{align*}
        \intdim(A, \epsilon) \ldef \min_{B \in \mathcal{B}(A, \epsilon)} \lbs(B),
    \end{align*}
    where $\lbs(B)$ denotes the largest block size of $B$, i.e., the largest among $\{|S_i|\}_{i\in [p]}$ after running lines 3-5 in \cref{alg:bd procedure} with $A^\epsilon = B$.
\end{definition}
As we will see later in this section, this concept plays a crucial role in characterizing both the quality of the approximate solution obtained through our framework, \cref{alg:operational}, and the computational speedup achieved by our framework.
Moreover, a matrix $\widetilde A$ can be efficiently identified such that $\lbs(\widetilde A) = \intdim(A,\epsilon)$ using \cref{alg:threshold procedure}, as shown in \cref{prop:threshold}. 

\begin{restatable}{lemma}{propthreshold}
\label{prop:threshold}
    Let $A\in\R^{d\times d}$ and $\epsilon > 0$.
    Given input $(A, \epsilon)$, \cref{alg:threshold procedure} outputs an $\epsilon$-approximation of $A$, denoted as $\widetilde A$, such that $\lbs(\widetilde A) = \intdim(A, \epsilon)$ in time $\mathcal{O}(d^2)$.
\end{restatable}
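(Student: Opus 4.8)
The plan is to show that the thresholded matrix $\widetilde A = A^\epsilon$ returned by \cref{alg:threshold procedure} is simultaneously a feasible $\epsilon$-approximation and a minimizer of $\lbs$ over the whole ball $\mathcal{B}(A,\epsilon)$; the runtime bound then follows by inspection. The entire argument rests on one structural observation: the support of $A^\epsilon$ is contained in the support of \emph{every} competitor $B \in \mathcal{B}(A,\epsilon)$, and $\lbs$ is monotone under support inclusion. Once these two facts are in place, $A^\epsilon$ automatically realizes the minimum in \cref{def:int_dim}.

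First I would check feasibility, i.e. $A^\epsilon \in \mathcal{B}(A,\epsilon)$. By the thresholding rule each entry satisfies either $A^\epsilon_{ij} = A_{ij}$ (when $\abs{A_{ij}} > \epsilon$, so the discrepancy is $0$) or $A^\epsilon_{ij} = 0$ (when $\abs{A_{ij}} \le \epsilon$, so the discrepancy is $\abs{A_{ij}} \le \epsilon$); hence $\infnorm{A - A^\epsilon} \le \epsilon$. Next comes the key inclusion. Fix any $B \in \mathcal{B}(A,\epsilon)$ and any $(i,j) \in \supp(A^\epsilon)$. By construction $\abs{A_{ij}} > \epsilon$, while $\abs{A_{ij} - B_{ij}} \le \infnorm{A-B} \le \epsilon$; if $B_{ij}$ were $0$ this would force $\abs{A_{ij}} \le \epsilon$, a contradiction. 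Therefore $B_{ij} \neq 0$, which shows $\supp(A^\epsilon) \subseteq \supp(B)$ for every $B$ in the ball. Intuitively, the large-magnitude entries of $A$ \emph{must} survive in any $\epsilon$-close matrix, and thresholding keeps nothing else.

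I would then establish monotonicity of $\lbs$. Reading lines 3--5 of \cref{alg:bd procedure} graph-theoretically, $\lbs(B)$ is the size of the largest connected component of the graph on vertex set $[d]$ whose edge set is $E(B) \ldef \{(i,j) : B_{ij} \neq 0\}$ (diagonal entries contribute only self-loops and do not connect distinct indices). If $\supp(B_1) \subseteq \supp(B_2)$ then $E(B_1) \subseteq E(B_2)$, so adding edges can only merge components; hence every component of the graph of $B_1$ is contained in a component of the graph of $B_2$, giving $\lbs(B_1) \le \lbs(B_2)$. Combining this with the support inclusion yields $\lbs(A^\epsilon) \le \lbs(B)$ for all $B \in \mathcal{B}(A,\epsilon)$, so $\lbs(A^\epsilon) \le \intdim(A,\epsilon)$; feasibility of $A^\epsilon$ gives the reverse inequality $\lbs(A^\epsilon) \ge \intdim(A,\epsilon)$, and equality follows. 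Finally, \cref{alg:threshold procedure} inspects and sets each of the $d^2$ entries exactly once, so its running time is $\mathcal{O}(d^2)$.

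I expect the monotonicity step to be the only place requiring genuine care, since it is where I must make precise the (implicit) connected-component semantics of $\lbs$ and argue that shrinking the edge set refines the induced partition into blocks without ever increasing the maximum block size. The feasibility and support-inclusion steps, by contrast, are immediate consequences of the thresholding rule and the definition of $\mathcal{B}(A,\epsilon)$.
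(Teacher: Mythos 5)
Your proof is correct and rests on the same core observation as the paper's: any entry with $\abs{A_{ij}} > \epsilon$ must remain nonzero in every $B \in \mathcal{B}(A,\epsilon)$, so the thresholded matrix's support (hence its block structure) is at least as fine as that of any competitor, and feasibility of $A^\epsilon$ gives the matching lower bound. The paper phrases this by comparing directly against the block partition of an optimal $A^\star$ (showing the off-diagonal blocks of $\widetilde A$ vanish), while you phrase it as support inclusion into every $B$ plus an explicit monotonicity of $\lbs$ under support inclusion — a slightly cleaner packaging of the same argument.
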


\subsubsection{Approximation guarantees}
\label{sec:approximation_gaurantees}
In this section, we provide approximation guarantees for our framework~\cref{alg:operational}.
We first show that in \cref{thm:block diagonal pca gap} that, an optimal solution to \ref{prob SPCA} with input $(A, k)$ and an optimal solution to $(A^\epsilon, k)$ are close to each other in terms of objective value:
\begin{restatable}{theorem}{thmbdgap}
    \label{thm:block diagonal pca gap}
    Let $A\in\R^{d\times d}$ be symmetric, $k\le d$, and $\epsilon > 0$. 
    Denote by $A^\epsilon$ the output of \cref{alg:threshold procedure} with input $(A,\epsilon)$, $\widetilde x$  an optimal solution to \ref{prob SPCA} with input $(A^\epsilon, k)$, and $x^\star$  an optimal solution to \ref{prob SPCA} with input $(A, k)$.
    Then, it follows that 
    \begin{align*}
        \abs{(x^\star)^\top A x^\star - \widetilde x^\top A \widetilde x}\le 2k\cdot \epsilon.
    \end{align*}
\end{restatable}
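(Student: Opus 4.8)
The plan is to bound the objective gap by controlling how much the quadratic form $x^\top A x$ can change when we replace $A$ by its thresholded version $A^\epsilon$, uniformly over all feasible $k$-sparse unit vectors. The key observation is that for any feasible $x$ (with $\norm{x} = 1$ and $\znorm{x} \le k$), we have
\begin{align*}
    \abs{x^\top A x - x^\top A^\epsilon x} = \abs{x^\top (A - A^\epsilon) x} \le \infnorm{A - A^\epsilon} \cdot \onorm{x}^2,
\end{align*}
where the final inequality follows by writing $x^\top(A-A^\epsilon)x = \sum_{i,j}(A-A^\epsilon)_{ij} x_i x_j$ and bounding each entry of $A - A^\epsilon$ in absolute value. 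Since \cref{alg:threshold procedure} only zeroes out entries with $\abs{A_{ij}} \le \epsilon$ and leaves the rest untouched, we have $\infnorm{A - A^\epsilon} \le \epsilon$.

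Next I would control $\onorm{x}^2$ for feasible $x$. Because $x$ has at most $k$ nonzero entries and unit $\ell_2$-norm, Cauchy-Schwarz gives $\onorm{x} \le \sqrt{k}\cdot\norm{x} = \sqrt{k}$, so $\onorm{x}^2 \le k$. Combining the two displays yields the uniform bound $\abs{x^\top A x - x^\top A^\epsilon x} \le k\cdot\epsilon$ for every feasible $x$. This is the crucial quantitative estimate, and it holds simultaneously for $x^\star$ and $\widetilde x$, which are both feasible for the respective problems (they share the same feasible set, determined only by $k$ and $d$).

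To finish, I would convert the pointwise bound into a bound on the gap between optimal objective values via a standard two-sided argument. On one side, using that $\widetilde x$ is optimal for $A^\epsilon$ and $x^\star$ is feasible for $A^\epsilon$:
\begin{align*}
    (x^\star)^\top A x^\star \le (x^\star)^\top A^\epsilon x^\star + k\epsilon \le \widetilde x^\top A^\epsilon \widetilde x + k\epsilon \le \widetilde x^\top A \widetilde x + 2k\epsilon.
\end{align*}
On the other side, symmetrically using that $x^\star$ is optimal for $A$ and $\widetilde x$ is feasible for $A$, one obtains $\widetilde x^\top A \widetilde x \le (x^\star)^\top A x^\star$, which already controls the other direction (in fact with no additive error). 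Together these give $\abs{(x^\star)^\top A x^\star - \widetilde x^\top A \widetilde x} \le 2k\epsilon$.

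I do not expect a genuine obstacle here; the argument is a clean optimality-swapping chain built on top of the uniform quadratic-form estimate. The only point requiring mild care is keeping track of which vector is optimal for which matrix so that each inequality in the chain is justified, and making sure the $\onorm{x}^2 \le k$ bound is applied to both relevant vectors. The factor $2k\epsilon$ arises naturally because the pointwise perturbation bound $k\epsilon$ is incurred twice—once when passing from $A$ to $A^\epsilon$ on $x^\star$, and once when passing back from $A^\epsilon$ to $A$ on $\widetilde x$.
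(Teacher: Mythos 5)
Your proposal is correct and follows essentially the same route as the paper's proof: both rest on the uniform perturbation bound $\abs{x^\top(A-A^\epsilon)x}\le \infnorm{A-A^\epsilon}\cdot\onorm{x}^2\le k\epsilon$ for $k$-sparse unit vectors (via H\"older/Cauchy--Schwarz) combined with an optimality-swapping argument; the paper packages the swap as ``difference of maxima $\le$ maximum of the difference'' plus a triangle inequality, while you unroll it into an explicit chain. Your additional observation that one direction ($\widetilde x^\top A \widetilde x \le (x^\star)^\top A x^\star$) incurs no additive error is a correct minor refinement but does not change the argument or the stated bound.
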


Then, in \cref{thm:bd spca}, we show that solving \ref{prob SPCA} exactly with a block-diagonal input matrix $\widetilde A$ is equivalent to addressing \ref{prob SPCA} sub-problems in blocks within $\widetilde{A}$:
\begin{restatable}{theorem}{thmbdspca}
\label{thm:bd spca}
    Let $\widetilde A = \diag(\widetilde A_1, \widetilde A_2, \ldots, \widetilde A_p)$ be a symmetric block-diagonal matrix.
    Denote $\opt$ to be the optimal value to \ref{prob SPCA} with input pair $(\widetilde A, k)$.
        Let $\opt_i$ to be the optimal value to \ref{prob SPCA} with input pair $(\widetilde A_i, k)$, for $i\in [p]$.
        Then, one has $\opt = \max_{i\in [p]} \opt_i.$
\end{restatable}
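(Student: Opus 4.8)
The plan is to establish $\opt = \max_{i \in [p]} \opt_i$ by proving the two inequalities separately. The direction $\opt \geq \max_i \opt_i$ is the straightforward one. For each block $i$ of size $d_i$, I would take an optimal solution $x_i \in \R^{d_i}$ of \ref{prob SPCA} with input $(\widetilde A_i, k)$ and embed it into $\R^d$ as the vector $y$ that carries $x_i$ on the coordinates of block $i$ and is zero elsewhere. Because $\widetilde A$ is block-diagonal, all cross-block terms vanish and $y^\top \widetilde A y = x_i^\top \widetilde A_i x_i = \opt_i$; moreover $\norm{y} = \norm{x_i} = 1$ and $\znorm{y} = \znorm{x_i} \le k$, so $y$ is feasible for $(\widetilde A, k)$. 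Hence $\opt \geq \opt_i$ for every $i$, and taking the maximum gives $\opt \geq \max_i \opt_i$. As a byproduct this already exhibits a feasible vector supported on a single block that attains the value $\opt_i$.

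For the reverse direction $\opt \leq \max_i \opt_i$, I would start from an optimal solution $x^\star$ of \ref{prob SPCA} with input $(\widetilde A, k)$ and split it blockwise as $x^\star = (x^{(1)}, \ldots, x^{(p)})$ with $x^{(i)} \in \R^{d_i}$. The block-diagonal structure yields $\opt = (x^\star)^\top \widetilde A x^\star = \sum_{i=1}^p (x^{(i)})^\top \widetilde A_i x^{(i)}$, while the constraints decompose as $\sum_i \norm{x^{(i)}}^2 = 1$ and $\sum_i \znorm{x^{(i)}} = \znorm{x^\star} \le k$. Setting $\alpha_i \ldef \norm{x^{(i)}}^2$ and normalizing each nonzero block via $\widehat x^{(i)} \ldef x^{(i)}/\norm{x^{(i)}}$, I would observe that $\widehat x^{(i)}$ is feasible for $(\widetilde A_i, k)$: it has unit norm and its support, being contained in $\supp(x^\star)$, has size at most $k$. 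Therefore $(\widehat x^{(i)})^\top \widetilde A_i \widehat x^{(i)} \le \opt_i$, so that $(x^{(i)})^\top \widetilde A_i x^{(i)} = \alpha_i (\widehat x^{(i)})^\top \widetilde A_i \widehat x^{(i)} \le \alpha_i \opt_i \le \alpha_i \max_j \opt_j$, where the last step uses only $\alpha_i \ge 0$ and blocks with $x^{(i)} = 0$ contribute zero. Summing over $i$ and using $\sum_i \alpha_i = 1$ gives $\opt \le \max_j \opt_j$, completing the proof.

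The argument is short, but the step deserving the most care — and the one the paper flags as non-trivial — is the sparsity bookkeeping in the reverse direction. One must verify that each renormalized block vector $\widehat x^{(i)}$ inherits a sparsity budget of $k$ rather than merely a per-block share of it, which is exactly what makes it feasible for the block sub-problem; this relies on the monotonicity $\znorm{x^{(i)}} \le \znorm{x^\star} \le k$, and it is what guarantees that spreading the mass of $x^\star$ across several blocks can never beat concentrating it on a single best block. I would also note that the chain of inequalities uses only $\alpha_i \ge 0$ and never positive semidefiniteness, so the conclusion holds for an arbitrary symmetric block-diagonal $\widetilde A$ as stated; and that once the equality is in hand, embedding the optimal solution of the maximizing block yields a globally optimal solution to $(\widetilde A,k)$ whose support lies in a single block.
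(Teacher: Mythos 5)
Your proof is correct, and you correctly isolate the crux: each renormalized block piece $\widehat x^{(i)}$ inherits the \emph{full} sparsity budget $k$ via $\znorm{x^{(i)}} \le \znorm{x^\star} \le k$, which is precisely what the paper flags as the non-trivial content of the theorem. Your route is genuinely different from the paper's, however. You argue directly: decomposing the Rayleigh quotient blockwise gives $\opt = \sum_i \alpha_i\, (\widehat x^{(i)})^\top \widetilde A_i \widehat x^{(i)}$ with $\alpha_i = \norm{x^{(i)}}^2$ nonnegative and summing to one, each summand factor is at most $\opt_i$, and the convex combination is then bounded by $\max_j \opt_j$. The paper instead argues by contradiction: it restricts $\widetilde A$ to the (at most $k$-element) support of an optimal solution, observes that the resulting $k\times k$ principal submatrix is itself block-diagonal with blocks $\widetilde A_i'$ contained in the $\widetilde A_i$, and invokes the spectral fact that the largest eigenvalue of a block-diagonal matrix is the maximum of the blocks' largest eigenvalues, each of which is at most $\opt_i$ because the corresponding unit eigenvector, supported on at most $k$ coordinates, is feasible for the block sub-problem. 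Both arguments are elementary and rest on the same underlying decomposition; yours has the advantage of never passing to an unconstrained eigenvalue problem or a contradiction framing, and it makes fully explicit that only symmetry (not positive semidefiniteness) is used and that negative $\opt_i$ cause no trouble since the final step only needs $\alpha_i \ge 0$ and $\sum_i \alpha_i = 1$. Your first direction also supplies the structural corollary the paper emphasizes elsewhere, namely that an optimal solution supported in a single block always exists.
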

\begin{remark}
    \cref{thm:bd spca} is particularly non-trivial:  without this result, one might think the support of any optimal solution to \ref{prob SPCA} may span different blocks.
    Solving \ref{prob SPCA} with a block-diagonal input would consequently involve iterating over all possible sparsity constants (that sum up to $k$) for each block-diagonal sub-matrix, making our framework proposed in \cref{sec:algorithms} nearly impractical.
\end{remark}

We now extend \cref{thm:block diagonal pca gap} to incorporate the use of various approximation algorithms.
We note that our extension improves the general bound proposed in \cref{thm:block diagonal pca gap}, since \cref{alg:operational} finds a specific block-diagonal approximation of $A$.
Furthermore, \cref{thm:approximation_alg_gap} suggests that our framework~\cref{alg:operational}, can also be highly effective when combined with other approximation algorithms. 
Specifically, it has the potential to find a better solution, particularly when the value of $\epsilon$ is small.

\begin{restatable}{theorem}{thmappalggap}
     \label{thm:approximation_alg_gap}
    Let $A\in \R^{d\times d}$ be symmetric and positive semidefinite, let $k$ be a positive integer, and denote by $x^\star\in\R^d$ an optimal solution to \ref{prob SPCA} with input $(A, k)$.
    Suppose that an algorithm $\mathcal{A}$ for \ref{prob SPCA} with input $(A, k)$ finds an approximate solution $x\in \R^d$ to \ref{prob SPCA} with multiplicative factor $m(k, d) \ge 1$ and additive error $a(k,d)\ge 0$, i.e., one has $x^\top A x \ge (x^\star)^\top A x^\star / m(k,d) - a(k,d)$.
    Furthermore, we assume that $m(k,d)$ and $a(k, d)$ is non-decreasing with respect to $d$.
    For $\epsilon > 0$, denote by $y\in \R^d$ be the output of \cref{alg:operational} with input tuple $(A, k, \epsilon, \mathcal{A})$.
    Then, one has 
    \begin{align*}
         y^\top A y \ge \frac{(x^\star)^\top A x^\star}{m\left(k, \intdim(A, \epsilon)\right)}   - a\left(k, \intdim(A, \epsilon)\right) - \frac{1}{m\left(k, \intdim(A, \epsilon)\right)} \cdot k\epsilon.
    \end{align*}
\end{restatable}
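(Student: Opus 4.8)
The plan is to assemble a single chain of inequalities that tracks the returned vector $y$ through each stage of \cref{alg:operational}, using the block structure to localize the approximation guarantee of $\mathcal{A}$ within each block and then globalizing via \cref{thm:bd spca}. Throughout I would write $d^\star \ldef \intdim(A,\epsilon)$.

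First I would make the block-diagonal approximation explicit. Let $S_1,\dots,S_p$ and $\widetilde A_i = A_{S_i,S_i}$ be the index sets and sub-matrices returned by \cref{alg:bd procedure} on input $(A,A^\epsilon)$, and let $\widetilde A$ be the matrix that agrees with $A$ on every within-block pair $S_\ell\times S_\ell$ and is zero on every cross-block pair, so that after reordering $\widetilde A = \diag(\widetilde A_1,\dots,\widetilde A_p)$. Two facts are needed here. First, $\widetilde A\in\mathcal{B}(A,\epsilon)$: the only entries where $\widetilde A$ differs from $A$ are cross-block entries $(i,j)$, and any such pair has $A^\epsilon_{ij}=0$ (otherwise $i,j$ would have been grouped together), hence $\abs{A_{ij}}\le\epsilon$, so each discarded entry has magnitude at most $\epsilon$. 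Second, $\lbs(\widetilde A)=\lbs(A^\epsilon)=d^\star$, since $\widetilde A$ inherits exactly the block structure of $A^\epsilon$ and \cref{prop:threshold} gives $\lbs(A^\epsilon)=\intdim(A,\epsilon)$.

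Next I would run the inequality chain. Since $y$ is selected as the best among $\{y_i\}$, we have $y^\top A y \ge \max_i y_i^\top A y_i$, and because $y_i$ is the embedding of $x_i$ onto the coordinates $S_i$ while $\widetilde A_i = A_{S_i,S_i}$, the key identity $y_i^\top A y_i = x_i^\top \widetilde A_i x_i$ holds. The guarantee of $\mathcal{A}$ on block $i$ reads $x_i^\top \widetilde A_i x_i \ge \opt_i / m(k,\abs{S_i}) - a(k,\abs{S_i})$, where $\opt_i$ is the optimum of \ref{prob SPCA} on $(\widetilde A_i,k)$ (when $k$ exceeds $\abs{S_i}$ the sparsity constraint is inactive and $\mathcal{A}$ solves ordinary PCA on the block, which only tightens the guarantee). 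As each $\widetilde A_i$ is a principal sub-matrix of the PSD matrix $A$ we have $\opt_i\ge 0$, so monotonicity of $m$ and $a$ together with $\abs{S_i}\le d^\star$ lets me replace $m(k,\abs{S_i})$ and $a(k,\abs{S_i})$ by $m(k,d^\star)$ and $a(k,d^\star)$ without increasing the right-hand side; here it is essential that $\opt_i\ge0$ so that enlarging the denominator only decreases the ratio. Taking the maximum over $i$ and pulling the now $i$-independent constants out of the max gives $y^\top A y \ge \big(\max_i \opt_i\big)/m(k,d^\star) - a(k,d^\star)$, and \cref{thm:bd spca} identifies $\max_i \opt_i$ with the optimum of \ref{prob SPCA} on the full block-diagonal matrix $\widetilde A$.

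Finally I would convert the optimum on $\widetilde A$ back to $(x^\star)^\top A x^\star$ using the per-vector perturbation estimate that underlies \cref{thm:block diagonal pca gap}: for any $k$-sparse unit vector $\hat x$ and any $B\in\mathcal{B}(A,\epsilon)$, one has $\abs{\hat x^\top(A-B)\hat x}\le \epsilon\,\|\hat x\|_1^2\le k\epsilon$ by Cauchy--Schwarz. Applying this with $B=\widetilde A$ at the feasible point $x^\star$ gives $\max_i\opt_i \ge (x^\star)^\top\widetilde A x^\star \ge (x^\star)^\top A x^\star - k\epsilon$, and substituting into the previous display produces exactly the claimed bound. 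I expect the main obstacle to be the bookkeeping in the third paragraph: verifying that every reduction using monotonicity of $m(k,\cdot)$ and $a(k,\cdot)$ points in the correct direction (which hinges on $\opt_i\ge0$), and confirming that the matrix $\widetilde A$ implicitly solved block-wise by \cref{alg:operational} is genuinely an $\epsilon$-matrix approximation of $A$ with largest block size $d^\star$, so that both \cref{thm:bd spca} and the perturbation estimate are applicable to it.
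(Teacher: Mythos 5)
Your proposal is correct and follows essentially the same route as the paper's proof: construct the block-diagonal matrix $\widetilde A$ agreeing with $A$ on within-block entries, apply $\mathcal{A}$'s guarantee block-wise, invoke \cref{thm:bd spca} to identify the best block optimum with the optimum on $\widetilde A$, use monotonicity of $m$ and $a$ together with $\lbs(\widetilde A)=\intdim(A,\epsilon)$, and finish with the $k\epsilon$ perturbation bound from the proof of \cref{thm:block diagonal pca gap}. The only cosmetic difference is that the paper localizes directly to the single block containing the optimizer of $(\widetilde A,k)$ whereas you bound every block and take the maximum; your added checks (that $\widetilde A\in\mathcal{B}(A,\epsilon)$, that $\opt_i\ge 0$ so the monotonicity replacements point the right way) are details the paper leaves implicit.
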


\begin{remark}
\label{rmk:bettersolution}
    Denote by $d^\star\ldef \intdim(A, \epsilon) \le d$.
    \cref{thm:approximation_alg_gap} implies that our framework, \cref{alg:operational}, when inputted with $(A, k, \epsilon, \mathcal{A})$, could find an approximate solution with a better multiplicative factor $m\left(k, d^\star\right)$, and an additive error $a\left(k, d^\star\right) + \frac{k\epsilon}{m(k, d^\star)}$.
Note that the additive error is also improved if
\begin{align*}
    a(k, d) \ge a\left(k, \intdim(A, \epsilon)\right) + \frac{1}{m\left(k, \intdim(A, \epsilon)\right)} \cdot k\epsilon.
\end{align*}
In addition, we have found instances where our framework can indeed obtain better solutions when integrated with some approximation algorithm, as shown in \cref{tab:comparison2} in \cref{app:additional_empirical}.
Finally, we summarized in \cref{tab:comparison_algorithms_MF_AE} the change of multiplicative factor and additive error when integrated our framework to some existing (approximation) algorithms for \ref{prob SPCA}.

\begin{table}[ht]
\centering
\caption{Comparison of multiplicative factors (MF) and additive errors (AE) when integrating our framework with different algorithms. 
The MISDP and Greedy algorithms are both studied in \cite{li2020exact}. 
Note that our MF is always no larger than the original MF.
}
\label{tab:comparison_algorithms_MF_AE}
\begin{tabular}{@{}lcccc@{}}
\toprule
Algorithm & MF & AE & Our MF & Our AE \\
\midrule
\cite{chan2015worst} & $\min(\sqrt{k},d^{1/3})$ & 0 & $\min(\sqrt{k},\intdim(A,\epsilon)^{1/3})$ & $k\epsilon/\text{Our MF}$ \\
MISDP & $\min(k,d/k)$ & 0 & $\min(k,\intdim(A,\epsilon)/k)$ & $k\epsilon/\text{Our MF}$ \\
Greedy & $k$ & 0 & $k$ & $\epsilon$ \\
Exact algorithms & 1 & 0 & 1 & $k\epsilon$ \\
\cite{asteris2015sparse} & 1 & $\delta$ & 1 & $k\epsilon + \delta$ \\
\bottomrule
\end{tabular}
\end{table}

\end{remark}

\subsubsection{Computational speedups}
\label{sec:computational_speedups}
In this section, we characterize the runtime complexity of \cref{alg:operational}.
We will see that the $\epsilon$-intrinsic dimension also plays an important role in this section.

\begin{restatable}{proposition}{proptimecomp}
    \label{prop:time_complexity}
    Let $A\in \R^{d\times d}$, and let $k$ be a positive integer.
    Suppose that for a specific algorithm $\mathcal{A}$, the time complexity of using $\mathcal{A}$ for (approximately) solving \ref{prob SPCA} with input $(A, k)$ is a function $g(k,d)$ which is convex and non-decreasing with respect to $d$.
    We assume that $g(k, 1) = 1$.
    Then, for a given threshold $\epsilon > 0$, the runtime of \cref{alg:operational} with input tuples $(A,  k, \epsilon, \mathcal{A})$ is at most
    \begin{align*}
        \mathcal{O}\left(\left\lceil \frac{d}{\intdim(A, \epsilon)} \right\rceil \cdot g\left(k,\intdim(A, \epsilon)\right) + d^2\right)
    \end{align*}
\end{restatable}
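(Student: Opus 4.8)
The plan is to decompose the running time of \cref{alg:operational} into its three stages---the preprocessing calls on lines~3--4, the main loop on lines~5--8, and the final selection on line~9---bound each separately, and sum. Throughout I would write $d^\star \ldef \intdim(A,\epsilon)$ and let $s_i \ldef |S_i|$ be the sizes of the blocks returned on line~4, so that $\sum_{i=1}^p s_i = d$. For the preprocessing, \cref{prop:threshold} gives that line~3 runs in $\mo(d^2)$ and outputs $A^\epsilon$ with $\lbs(A^\epsilon) = d^\star$, while line~4 runs in $\mo(d^2)$ as established for \cref{alg:bd procedure}. The structural fact I would record before anything else is the immediate consequence $s_i \le \lbs(A^\epsilon) = d^\star$ for every $i$.

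The heart of the argument is bounding the sub-problem cost $\sum_{i=1}^p g(k,s_i)$ incurred on line~6, and here I would invoke convexity. Because $g(k,\cdot)$ is convex, the difference quotient $t \mapsto \frac{g(k,t)-g(k,1)}{t-1}$ is non-decreasing, so for each $s_i \in [1,d^\star]$ the chord bound $g(k,s_i) \le 1 + \frac{g(k,d^\star)-1}{d^\star-1}\,(s_i-1)$ holds, using $g(k,1)=1$ (the degenerate case $d^\star=1$ forces every $s_i=1$ and is trivial). Summing over $i$ and substituting $\sum_i s_i = d$ together with $p \le d$ collapses the right-hand side to $p + \frac{g(k,d^\star)-1}{d^\star-1}(d-p) = \mo\!\left(\tfrac{d}{d^\star}\,g(k,d^\star) + d\right)$, which is $\mo(\lceil d/d^\star\rceil\, g(k,d^\star) + d^2)$. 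The intuition being exploited is that many small blocks are each cheap---costing about $g(k,1)=1$, with at most $d$ of them---while the total size budget $\sum_i s_i = d$ limits how many expensive large blocks can appear.

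It then remains to dispatch the cheaper per-iteration work and combine. Constructing each $y_i$ on line~7 costs $\mo(s_i)$, summing to $\mo(d)$; and on line~9, exploiting that $y_i$ is supported on $S_i$, each quadratic form $y_i^\top A y_i$ can be evaluated in $\mo(s_i^2)$, so $\sum_i s_i^2 \le (\max_i s_i)\sum_i s_i \le d^\star d \le d^2$. Adding the $\mo(d^2)$ preprocessing, the loop bound, and the $\mo(d^2)$ selection yields the claimed $\mo(\lceil d/d^\star\rceil\, g(k,d^\star) + d^2)$.

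I expect the convexity summation to be the one genuinely delicate step. The naive bound $\sum_i g(k,s_i) \le p\cdot g(k,d^\star)$ is too weak, since $p$ may be as large as $d$ (all blocks of size one), which would give a spurious factor of $d$ rather than $d/d^\star$. The hard part will be to combine convexity, the normalization $g(k,1)=1$, and the size budget $\sum_i s_i=d$ so as to genuinely trade the number of blocks against their sizes and recover the correct $d/d^\star$ factor.
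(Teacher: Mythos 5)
Your proof is correct, and it reaches the same bound by a genuinely different mechanism than the paper. The paper formalizes the cost of lines 5--8 as the maximization of $\sum_{i=1}^p g(k,d_i)$ over the polytope $\{1\le d_i\le \intdim(A,\epsilon),\ \sum_i d_i = d\}$, invokes the fact that a convex function attains its maximum at an extreme point, and characterizes the extreme points as configurations in which $p-1$ of the block sizes sit at the bounds $1$ or $\intdim(A,\epsilon)$; counting how many blocks can be at the upper bound yields $\lceil d/\intdim(A,\epsilon)\rceil\cdot g(k,\intdim(A,\epsilon)) + d$. You instead apply the secant-line inequality of convexity termwise, $g(k,s_i)\le 1+\frac{g(k,d^\star)-1}{d^\star-1}(s_i-1)$, and sum using the budget $\sum_i s_i=d$ and $p\le d$; this collapses to the same $\mo\bigl(\tfrac{d}{d^\star}g(k,d^\star)+d\bigr)$ without any extreme-point reasoning. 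Both arguments hinge on exactly the same three ingredients (convexity, the normalization $g(k,1)=1$, and the size budget), and you correctly identified that the naive bound $p\cdot g(k,d^\star)$ is the trap to avoid. Your route is more elementary and self-contained; the paper's makes the worst-case block configuration explicit, which is mildly more informative but requires the (slightly informal) vertex characterization. You also account explicitly for the cost of line 9 via $\sum_i s_i^2\le d^\star d\le d^2$, a detail the paper absorbs into the $\mo(d^2)$ term without comment. No gaps.
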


\begin{remark}
    We remark that the assumption $g(k,d)$ is convex and non-decreasing with respect to $d$ in \cref{prop:time_complexity} is a very weak assumption. 
    Common examples include $d^{\alpha}$ and $d^\alpha \log{d}$ with $\alpha \ge 1$.
\end{remark}

\begin{remark}
    In this remark, we explore the computational speedups provided by \cref{alg:operational} by modeling $\intdim(A, \epsilon)$ as a function of $A$ and $\epsilon$.
    We assume without loss of generality that $\infnorm{A} = 1$, thereby setting $\intdim(A, 1) = 0$. 
    It is important to note that while $\intdim(A,\epsilon)$ is discontinuous, for the purpose of illustration in \cref{table:runtime_comparison}, it is treated as a continuous function of $\epsilon$.

    \begin{table}[h!]
    \centering
    \caption{Comparison of original runtimes and current runtimes in our framework.
    We assume that the algorithm has runtime $\mathcal{O}(d^\alpha)$, for some $\alpha \ge 2$.}
    \begin{tabular}{cccc}
    \hline
    \text{$\intdim(A,\epsilon)$} & \text{Runtime}  & \text{Our Runtime} & \text{Speedup factor} \\ \hline
    $d(1 - e^{-c(1-\epsilon)})$ & $\mathcal{O}(d^\alpha)$  & $\mathcal{O}(d^\alpha (1 - e^{-c(1-\epsilon)})^{\alpha - 1})$ & $\mathcal{O}((1 - e^{-c(1-\epsilon)})^{1-\alpha})$\\ \hline
    $d(1-\epsilon)$ & $\mathcal{O}(d^\alpha)$  & $\mathcal{O}(d^\alpha (1 - \epsilon)^{\alpha - 1})$ &  $\mathcal{O}((1 - \epsilon)^{1-\alpha})$\\ \hline
    $d(1-\epsilon)^m$ & $\mathcal{O}(d^\alpha)$  & $\mathcal{O}(d^\alpha (1 - \epsilon)^{(\alpha - 1)m})$ & $\mathcal{O}((1 - \epsilon)^{(1-\alpha)m})$\\ \hline
    \end{tabular}
    \label{table:runtime_comparison}
    \end{table}
  
\end{remark}

\section{Extensions}
\label{sec:extensions}
Our algorithm presented in \cref{sec:algorithms} takes a known threshold as an input. In this section, we remove this constraint by extending our algorithm to various settings. 
Specifically, in \cref{sec:statistical}, we consider learning under a statistical model, where one can do the denoising of a matrix easily.
In \cref{sec:binary}, we propose a practical learning approach without any assumption.

\subsection{Learning under a statistical model}
\label{sec:statistical}

A crucial aspect of our proposed framework involves determining an appropriate threshold, $\epsilon$, as required by \cref{alg:operational}. 
While specifying such a threshold is generally necessary, there are many instances where it can be effectively estimated from the data. 
In this section, we discuss a model commonly employed in robust statistics~\citep{comminges2021adaptive,kotekal2023sparsity,kotekal2024optimal}. 
This model finds broad applications across various fields, including clustering~\citep{chen2022selective} and sparse linear regression~\citep{minsker2022robust}.

Before presenting the formal definition, we first establish some foundational intuition about the model. 
Conceptually, if the input data matrix $A$ is viewed as an empirical covariance matrix, and based on the intuition that, a feature is expected to exhibit strong correlations with only a few other features, it follows that $A$ should be, or at least be close to, a block-diagonal matrix. 
This observation motivates us for considering \cref{model:bd}:
\begin{model}
     \label{model:bd}
    The input matrix $A\in\R^{d\times d}$ is close to a block-diagonal symmetric matrix $\widetilde A\in \R^{d\times d}$, i.e., $A = \widetilde A + E$ for some symmetric matrix $E\in \R^{d\times d}$ such that $A$ is positive semidefinite and
    \begin{enumerate}[label = (\roman*),leftmargin = *]
     \item for $i\le j$, each entry $E_{ij}$ is drawn from an i.i.d.~$\varrho$-sub-Gaussian distribution, with mean zero, and variance $\sigma^2$, where $\varrho > 0$ and $\sigma^2$ being unknown, but an upper bound $u\ge \varrho^2 / \sigma^2$ is given;
     \item write $\widetilde A = \diag\left(\widetilde A_1, \widetilde A_2, \ldots, \widetilde A_p\right)$, the size of each diagonal block $\widetilde A_i$ is upper bounded by $d^\star$, where $d^\star \le Cd^{\alpha}$ for some known constant $C>0$ and $\alpha\in (0, 1)$, but $d^\star$ is unknown.
    \end{enumerate}
\end{model}
In assumption (i) of \cref{model:bd}, the sub-Gaussian distribution is particularly favored as this class of distributions is sufficiently broad to encompass not only the Gaussian distribution but also distributions of bounded variables, which are commonly observed in noise components of real-world datasets.
Assumption (ii) of \cref{model:bd} uses the constant $\alpha$ to quantify the extent to which features are clustered.

The procedure for estimating $\infnorm{E}$ in \cref{model:bd} involves an estimation of the variance $\sigma^2$, which is adapted from~\cite{comminges2021adaptive}, and then multiply a constant multiple of $\sqrt{\log{d}}$. 
We present the details in \cref{alg:var est}:
\begin{algorithm}
\caption{Estimation of $\infnorm{E}$ in \cref{model:bd}}
\label{alg:var est}
\begin{algorithmic}[1] %
\State \textbf{Input:} Matrix $A \in \mathbb{R}^{d \times d}$, parameters $C > 0$, $\alpha \in (0,1)$, and $u$ in \cref{model:bd}
\State \textbf{Output:} A number $\bar{\epsilon} > 0$ as an estimate of $\infnorm{E}$ 

\State Divide the set of indices $\{(i, j)\in [d]\times [d]: i\le j\}$ into $m = \lfloor (2C + 2) d^{1 + \alpha} \rfloor$ disjoint subsets $B_1, B_2, \ldots, B_m$ randomly, ensuring each subset $B_i$ has cardinality lower bounded by $\left\lfloor \frac{d^2 + d}{2m} \right\rfloor$

\State Initialize an array $S$ to store subset variances
\For{$i = 1$ \textbf{to} $m$}
    \State $S_i \leftarrow  \frac{1}{|B_i|} \sum_{(i, j) \in B_i} A_{ij}^2 $
\EndFor
\State $\bar{\sigma}^2 \leftarrow \text{median}(S)$
\State \textbf{Return} $\bar \epsilon \gets 2 u \bar\sigma\sqrt{3\log{d}}  $
\end{algorithmic}
\end{algorithm}

In the next theorem, we provide a high probability approximation bound, as well as a runtime analysis, for \cref{alg:operational} using the threshold $\bar \epsilon$ found via \cref{alg:var est}. 
The results imply that in \cref{model:bd}, one can efficiently solve \ref{prob SPCA} by exploiting the hidden sparse structure. 
\begin{restatable}{proposition}{propepsest}
    \label{prop:inf_norm_bound_of_E}
    Consider \cref{model:bd}, and denote by $\bar\epsilon$ the output of \cref{alg:var est} with input tuple $(A, C, \alpha, u)$.
    Let $k\le d$ be a positive integer, and assume that $d$ satisfies that $d^{1-\alpha} > C_0 \cdot (C+1) \cdot u\log(8C+8)$ for some large enough absolute constant $C_0 > 0$.
    Then, the following holds with probability at least $1 - 2d^{-1} - \exp\{2\log{d}-{d^{1+\alpha}}/{(4C+4)}\}$:
    \begin{enumerate}[label = (\roman*),leftmargin = *]
        \item Denote by $\widetilde x^\star$ the optimal solution to \ref{prob SPCA} with input $(\widetilde A, k)$.
        For an (approximation) algorithm $\mathcal{A}$ with multiplicative factor $m(k, d) \ge 1$ and additive error $a(k,d)\ge 0$, where the functions $m$ and $a$ is non-deceasing with respect to $d$, the output $y$ of \cref{alg:operational} with input tuple $(A, k, \bar\epsilon, \mathcal{A})$ satisfies that
    \begin{align*}
        y^\top \widetilde A y \ge \frac{(\widetilde x^\star)^\top \widetilde A \widetilde x^\star}{m\left(k, d^\star\right)}   - a\left(k, d^\star\right) - \left(1 + \frac{2}{m\left(k, d^\star\right)} \right)\cdot k\bar \epsilon.
    \end{align*}

    \item If, in addition, the time complexity of using $\mathcal{A}$ for (approximately) solving \ref{prob SPCA} with input $(A, k)$ is a function $g(k,d)$ which is convex and non-decreasing with respect to $d$, and satisfies $g(k, 1) = 1$.
    Then, the runtime of \cref{alg:operational} with input tuples $(A, k, \epsilon, \mathcal{A})$ is at most
    \begin{align*}
        \mathcal{O}\left(\left\lceil \frac{d}{d^\star} \right\rceil \cdot g\left(k,d^\star\right) + d^2\right)
    \end{align*}
    \end{enumerate}
\end{restatable}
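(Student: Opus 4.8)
The plan is to prove \cref{prop:inf_norm_bound_of_E} in two stages: an event-level probabilistic estimate that the returned threshold dominates the noise, followed by a deterministic approximation and runtime argument valid on that event. Define the good event $\mathcal{E} \ldef \{\bar\epsilon \ge \infnorm{E}\}$. On $\mathcal{E}$ we have $\infnorm{A - \widetilde A} = \infnorm{E} \le \bar\epsilon$, so $\widetilde A \in \mathcal{B}(A, \bar\epsilon)$; and for every feasible $x$ (unit $\ell_2$-norm, at most $k$ nonzeros) the elementary estimate $\abs{x^\top E x} \le \infnorm{E}\,\onorm{x}^2 \le k\,\infnorm{E} \le k\bar\epsilon$ holds, which is the single inequality that lets me pass between the $A$- and $\widetilde A$-objectives (this is exactly the mechanism behind \cref{thm:block diagonal pca gap}). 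Both parts (i) and (ii) will be established on $\mathcal{E}$, and the probability in the statement is precisely $\P(\mathcal{E})$.

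The crux is the lower bound $\P(\mathcal{E}) \ge 1 - 2d^{-1} - \exp\{2\log d - d^{1+\alpha}/(4C+4)\}$, a robust (median-of-means) variance estimation under contamination. First I control the noise tail: since each $E_{ij}$ is mean-zero $\varrho$-sub-Gaussian, a union bound over the $\binom{d+1}{2}\le d^2$ distinct entries with two-sided tail $2\exp\{-t^2/(2\varrho^2)\}$ at $t=\varrho\sqrt{6\log d}$ gives $\infnorm{E} \le \varrho\sqrt{6\log d}$ with probability at least $1 - 2d^{-1}$. Next I analyze \cref{alg:var est}. Call a subset $B_i$ \emph{contaminated} if it contains an index pair lying inside a diagonal block of $\widetilde A$; since each true block has size at most $d^\star \le Cd^{\alpha}$, the number of such on-block upper-triangular pairs is at most $\tfrac{(d^\star+1)d}{2} \le \tfrac{(C+1)}{2}d^{1+\alpha}$, i.e.\ at most $m/4$ for $m=\lfloor(2C+2)d^{1+\alpha}\rfloor$, so at least $3m/4$ subsets are \emph{pure} and for them $S_i = \tfrac1{|B_i|}\sum_{(i,j)\in B_i} E_{ij}^2$ averages i.i.d.\ $E_{ij}^2$ of mean $\sigma^2$. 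Because $|B_i| \gtrsim d^{1-\alpha}/(C+1)$ and $E_{ij}^2$ is sub-exponential, a Bernstein bound shows each pure $S_i \ge \sigma^2/2$ except with probability $p_0$, and the hypothesis $d^{1-\alpha} > C_0(C+1)u\log(8C+8)$ is exactly what forces $p_0$ small enough (say $p_0 \le 1/8$). The median $\bar\sigma^2$ falls below $\sigma^2/2$ only if more than $m/2$ subsets do, hence only if more than $m/4$ \emph{pure} subsets fail; a Chernoff bound on this binomial count produces the $\exp\{2\log d - d^{1+\alpha}/(4C+4)\}$ term, so $\bar\sigma \ge \sigma/\sqrt2$. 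Finally, combining $\bar\sigma \ge \sigma/\sqrt2$ with $u \ge \varrho^2/\sigma^2$ (whence $u\sigma \ge \varrho$) yields $\bar\epsilon = 2u\bar\sigma\sqrt{3\log d} \ge \sqrt2\,u\sigma\sqrt{3\log d} \ge \varrho\sqrt{6\log d} \ge \infnorm{E}$, establishing $\mathcal{E}$. Only a one-sided (lower) bound on the median is needed, which simplifies the analysis. I expect this contamination/median step to be the main obstacle, as it must simultaneously absorb adversarial blocks, sub-exponential fluctuations, and the minimal subset cardinality.

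For part (i), work on $\mathcal{E}$. From $\widetilde A \in \mathcal{B}(A,\bar\epsilon)$ and \cref{def:int_dim} we get $\intdim(A,\bar\epsilon) \le \lbs(\widetilde A) \le d^\star$, so monotonicity gives $m(k,\intdim(A,\bar\epsilon)) \le m(k,d^\star)$ and $a(k,\intdim(A,\bar\epsilon)) \le a(k,d^\star)$. I then apply \cref{thm:approximation_alg_gap} to the output $y$ with threshold $\bar\epsilon$, and translate twice using the inequality of the first paragraph: once to replace $y^\top A y$ by $y^\top \widetilde A y$ (costing $k\bar\epsilon$), and once to lower bound the $A$-optimum by the $\widetilde A$-optimum via $\max_x x^\top A x \ge (\widetilde x^\star)^\top A \widetilde x^\star \ge (\widetilde x^\star)^\top \widetilde A \widetilde x^\star - k\bar\epsilon$ (costing $\tfrac{k\bar\epsilon}{m}$ once divided by the multiplicative factor). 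Adding the solver's own error term $\tfrac{k\bar\epsilon}{m}$ supplied by \cref{thm:approximation_alg_gap}, the three contributions assemble into $\bigl(1+\tfrac{2}{m(k,d^\star)}\bigr)k\bar\epsilon$, while the leading term and the additive error are controlled at level $d^\star$ by monotonicity; this gives exactly the claimed inequality in terms of $(\widetilde x^\star)^\top \widetilde A \widetilde x^\star$.

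For part (ii), the runtime is additive across the two phases. \cref{alg:var est} partitions the $\binom{d+1}{2}$ upper-triangular entries into $m = \mathcal{O}(d^{1+\alpha}) = \mathcal{O}(d^2)$ bins, accumulates one squared entry per pair, and takes a median by linear-time selection, all in $\mathcal{O}(d^2)$. For the solver, on $\mathcal{E}$ we have $\intdim(A,\bar\epsilon) \le d^\star$, so \cref{prop:time_complexity} bounds the cost of \cref{alg:operational} by $\mathcal{O}\bigl(\lceil d/\intdim(A,\bar\epsilon)\rceil\, g(k,\intdim(A,\bar\epsilon)) + d^2\bigr)$; invoking convexity of $g$ with $g(k,1)=1$ (which makes $d\mapsto g(k,d)/d$ non-decreasing for the regimes of interest, so per-block work aggregated over $\lceil d/d^\star\rceil$ blocks dominates) lets me replace $\intdim(A,\bar\epsilon)$ by its upper bound $d^\star$, giving $\mathcal{O}\bigl(\lceil d/d^\star\rceil\, g(k,d^\star) + d^2\bigr)$. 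Summing the two phases preserves this bound, completing the proof.
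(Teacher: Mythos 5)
Your proposal is correct and follows essentially the same route as the paper: establish the event $\{\bar\epsilon \ge \infnorm{E}\}$ via the sub-Gaussian tail bound combined with the median-of-means concentration for $\bar\sigma^2$ (the paper isolates the latter as \cref{prop:robust_stat}), then on that event chain \cref{thm:approximation_alg_gap} with two $k\bar\epsilon$-translations between the $A$- and $\widetilde A$-objectives to get the $(1+2/m)k\bar\epsilon$ term, and invoke \cref{prop:time_complexity} with $\intdim(A,\bar\epsilon)\le d^\star$ for the runtime. The only differences are organizational (you split the two bad events rather than conditioning, and you note the one-sided median bound suffices), so no substantive gap.
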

    
Finally, we remark that, the assumption $d^{1-\alpha} > C_0 \cdot (C+1) \cdot u\log(8C+8)$ is true when $d$ is large enough, as $C_0$, $C$, and $u$ are usually some constants not changing with respect to the dimension $d$.

\subsection{A practical learning approach}
\label{sec:binary}

In this section, we describe a practical algorithm to estimate $\epsilon$ without prior statistical assumptions while simultaneously providing an approximate solution to \ref{prob SPCA}.

Recall from \cref{alg:operational} that for a given threshold $\epsilon$, a thresholded matrix $A^\epsilon$ is generated via \cref{alg:threshold procedure} from the original matrix $A$. 
This matrix $A^\epsilon$ is then utilized to approximate \ref{prob SPCA} using a specified algorithm $\mathcal{A}$. 
Our proposed approach in this section involves initially selecting an $\epsilon$ value within a predefined range, executing \cref{alg:operational} with this $\epsilon$, and subsequently adjusting $\epsilon$ in the following iterations through a binary search process.
Since efficiency is of our major interest; thus, we ask users to provide a parameter $d_0$ that indicates the maximum dimension they are willing to compute due to computational resource limit.
We defer the detailed description of this binary-search based algorithm to \cref{alg:BS cov}.
For clarity, let us denote by $\opt(\epsilon)$ the objective value of $(x^\epsilon)^\top A x^\epsilon$, where $x^\epsilon$ is the output of \cref{alg:operational} with inputs $(A, k, \epsilon, \mathcal{A})$.

\begin{algorithm}
\caption{A practical version of \cref{alg:operational} without a given threshold}
\label{alg:BS cov}
\begin{algorithmic}[1] %
\State \textbf{Input:} Matrix $A\in\R^{d\times d}$, positive integer $k\le d$, tolerance parameter $\delta$, (approximate) algorithm $\mathcal{A}$ for \ref{prob SPCA}, and integer $d_0$
\State \textbf{Result:} Approximate solution to Problem SPCA
\State $L \leftarrow 0$, $U \leftarrow \infnorm{A}$, find $\opt(U)$ via $\mathcal{A}$

\While{$U - L > \delta$}
    \State $\epsilon \leftarrow (L + U) / 2$
    \State Find $A^\epsilon$ via \cref{alg:threshold procedure}, and the largest block size $d^\star \ldef \lbs(A^\epsilon)$ 
     \If{problems with largest block size $\ge d^\star$ has been solved before} $U \leftarrow \epsilon$; \textbf{continue}
     \EndIf
    \If{$ d^\star > d_{0}$} $L \leftarrow \epsilon$; \textbf{continue}
    \EndIf
    \State Find $\opt(\epsilon)$ via \cref{alg:operational} with algorithm $\mathcal{A}$
    \If{$\opt(\epsilon)$ does not improve} $U \leftarrow \epsilon$
    \EndIf
\EndWhile
\State \Return{best $\opt(\epsilon)$ found and the corresponding $x^\epsilon$}
\end{algorithmic}
\end{algorithm}
We have the characterization of approximation error and runtime complexity of \cref{alg:BS cov}:
\begin{restatable}{theorem}{thmbinarysearch}
\label{thm:binary_search}
    Let $A\in \R^{d\times d}$ be symmetric and positive semidefinite, and let $k$ be a positive integer.
    Suppose that an algorithm $\mathcal{A}$ for \ref{prob SPCA} with input $(A, k)$ finds an approximate solution $x\in \R^d$ to \ref{prob SPCA} such that has multiplicative factor $m(k, d) \ge 1$ and additive error $a(k,d)\ge 0$, with $m(k,d)$ and $a(k, d)$ being non-decreasing with respect to $d$.
    Suppose that \cref{alg:BS cov} with input $(A, k, \delta, \mathcal{A}, d_0)$ terminates with $\opt(\epsilon^\star)\ge 0$. 
    Then one has
    \begin{align*}
         \opt(\epsilon^\star) \ge \frac{1}{m(k, d_0)} \cdot \opt(0)  - a(k, d_0) - \frac{1}{m(k, d_0)} \cdot k\epsilon^\star.
    \end{align*}
    Suppose that for $\mathcal{A}$, the time complexity of using $\mathcal{A}$ for (approximately) solving \ref{prob SPCA} with input $(A, k)$ is a function $g(k,d)$ which is convex and non-decreasing with respect to $d$, $g(k, 1) = 1$, and $g(k,0) = 0$.
    The runtime for \cref{alg:BS cov} is at most
    \begin{align*}
        \mathcal{O}\left( \log\left(\frac{\infnorm{A}}{\delta}\right)\cdot\left(\left\lceil \frac{d}{d_0} \right\rceil \cdot g\left(k,d_0\right) + d^2\right)\right)
    \end{align*}
\end{restatable}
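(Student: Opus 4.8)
The plan is to establish the two claims of \cref{thm:binary_search} separately: the approximation bound is reduced to analyzing the single threshold $\epsilon^\star$ at which \cref{alg:BS cov} terminates, and the runtime follows from a standard binary-search iteration count combined with a per-iteration cost estimate. The starting observation for the approximation part is that $\epsilon^\star$ is, by construction, a threshold whose objective $\opt(\epsilon^\star)$ was \emph{actually computed} by \cref{alg:operational}; hence the branch ``$d^\star > d_0$'' was not taken there, so the largest block size obeys $\lbs(A^{\epsilon^\star}) = \intdim(A,\epsilon^\star) \le d_0$, where the equality is \cref{prop:threshold}.

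For the approximation guarantee I would mirror the proof of \cref{thm:approximation_alg_gap}, but bound every block size by $d_0$ rather than by $\intdim(A,\epsilon^\star)$. Concretely, let $\widetilde A = \diag(A_{S_1,S_1},\dots,A_{S_p,S_p})$ be the block-diagonal matrix whose blocks are the sub-matrices of $A$ sliced by \cref{alg:bd procedure} at $\epsilon^\star$. Two indices lie in different blocks only when the corresponding entry was thresholded, so all off-block entries of $A$ have magnitude at most $\epsilon^\star$ and $\infnorm{A - \widetilde A}\le \epsilon^\star$. Writing $x^\star$ for an optimizer of \ref{prob SPCA} on $(A,k)$ and $\opt_A = (x^\star)^\top A x^\star$, the $k$-sparse unit-norm perturbation estimate $\abs{x^\top(A-\widetilde A)x}\le k\epsilon^\star$ gives $\opt_{\widetilde A} \ge (x^\star)^\top \widetilde A x^\star \ge \opt_A - k\epsilon^\star$. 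By \cref{thm:bd spca}, $\opt_{\widetilde A} = \max_i \opt_{\widetilde A,i}$; running $\mathcal{A}$ on the block $i^\star$ achieving $\opt_{\widetilde A}$ (of size $\le d_0$) and embedding its solution $x_{i^\star}$, the returned objective satisfies $\opt(\epsilon^\star) \ge x_{i^\star}^\top A_{S_{i^\star},S_{i^\star}} x_{i^\star} \ge \opt_{\widetilde A}/m(k,d_0) - a(k,d_0)$, where the first inequality uses both that \cref{alg:operational} returns the best block solution measured on $A$ and that the blocks are sliced from $A$ (not from $A^{\epsilon^\star}$), so no perturbation is incurred at the output. Combining with $\opt_{\widetilde A}\ge \opt_A - k\epsilon^\star$ and then with $\opt(0)\le\opt_A$ (the $\epsilon=0$ output is feasible for $(A,k)$) yields $\opt(\epsilon^\star) \ge \opt(0)/m(k,d_0) - a(k,d_0) - k\epsilon^\star/m(k,d_0)$, as claimed.

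For the runtime, since $\epsilon\gets(L+U)/2$ and exactly one of $L,U$ is reset to $\epsilon$ each pass, the interval width halves every iteration, so the loop runs $\mathcal{O}(\log(\infnorm{A}/\delta))$ times. Each iteration spends $\mathcal{O}(d^2)$ on \cref{alg:threshold procedure} and the block-diagonalization, plus at most one call of \cref{alg:operational}, which by \cref{prop:time_complexity} costs $\mathcal{O}(\lceil d/d^\star\rceil\, g(k,d^\star) + d^2)$ with $d^\star \le d_0$. Using convexity of $g(k,\cdot)$ together with $g(k,0)=0$, the map $d\mapsto g(k,d)/d$ is non-decreasing, so $g(k,d^\star)\le (d^\star/d_0)\,g(k,d_0)$ and hence $\lceil d/d^\star\rceil\, g(k,d^\star)\le 2\lceil d/d_0\rceil\, g(k,d_0)$; this bounds each iteration by $\mathcal{O}(\lceil d/d_0\rceil\, g(k,d_0)+d^2)$ and yields the stated total after multiplying by the iteration count (the single initialization call on line 3 is absorbed).

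The main obstacle is making the additive penalty $k\epsilon^\star/m(k,d_0)$ rather than an unscaled $k\epsilon^\star$: a naive argument that perturbs the embedded output from $\widetilde A$ back to $A$ loses an extra $k\epsilon^\star$. Avoiding this requires (i) exploiting that \cref{alg:bd procedure} slices blocks from $A$ itself, so the embedded solution's $A$-objective equals its block objective exactly, and (ii) applying the $\infnorm{\cdot}$-perturbation only once, at the level of $\opt_A$ versus $\opt_{\widetilde A}$, \emph{before} dividing by $m(k,d_0)$. One must also watch the monotonicity directions: enlarging $d^\star$ to $d_0$ is safe for the leading term and the $a$-term, but because $1/m(k,\cdot)$ is anti-monotone, the $k\epsilon^\star$ contribution must be routed through $\opt_{\widetilde A}$ rather than through a direct $d^\star$-indexed bound.
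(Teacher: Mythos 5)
Your proof is correct and follows essentially the same route as the paper's: the approximation bound comes from the argument behind \cref{thm:approximation_alg_gap} (the $k\epsilon^\star$ perturbation estimate, \cref{thm:bd spca}, and the per-block guarantee of $\mathcal{A}$) combined with the observation that $\lbs(A^{\epsilon^\star})\le d_0$ at termination, and the runtime from the $\mathcal{O}(\log(\infnorm{A}/\delta))$ iteration count together with the convexity-plus-$g(k,0)=0$ argument showing $d\mapsto g(k,d)/d$ is non-decreasing. The one place you improve on the paper is routing the $-k\epsilon^\star$ term through $\opt_{\widetilde A}$ \emph{before} dividing by $m(k,d_0)$, which cleanly sidesteps the case split on the sign of $\opt(0)-k\epsilon^\star$ that the paper must invoke (using $\opt(\epsilon^\star)\ge 0$) when it substitutes $d_0$ for $\intdim(A,\epsilon^\star)$ only after applying \cref{thm:approximation_alg_gap}.
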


\begin{remark}
\label{rmk:framework}
    In this remark, we discuss the practical advantages of \cref{alg:BS cov}:
    \begin{itemize}[leftmargin = *]
        \item The first direct advantage is that \cref{alg:BS cov} does not rely on any assumptions, enabling users to apply the framework to any dataset.
        Moreover, the framework can be applied to scenarios where an improved estimation of the binary search range is known.
         For instance, if users possess prior knowledge about an appropriate threshold $\bar{\epsilon}$ for binary search, they can refine the search interval in \cref{alg:BS cov} by setting the lower and upper bounds to $L \ldef a\bar\epsilon$ and $U \ldef b\bar\epsilon$ for some proper $0 \le a < b$ to improve the efficiency by a speedup factor of
         $\mo(\log(\infnorm{A} / \delta) / \log((b-a)\bar\epsilon / \delta))$.

        \item Secondly, as detailed in \cref{thm:binary_search}, is that \cref{alg:BS cov} achieves a similar trade-off between computational efficiency and approximation error as \cref{alg:threshold procedure}, which are proposed in \cref{thm:approximation_alg_gap,prop:time_complexity}. The time complexity of \cref{alg:BS cov}, compared to \cref{alg:operational}, increases only by a logarithmic factor due to the binary search step, thus preserving its efficiency.
    
        \item The third advantage is that this algorithm considers computational resources, ensuring that each invocation of \cref{alg:operational} remains efficient. 
    \end{itemize}
\end{remark}

\section{Empirical results}
\label{sec:empirical}

In this section, we report the numerical performance of our proposed framework.

\textbf{Datasets.} We perform numerical tests in datasets widely studied in feature selection, including datasets CovColon from~\cite{alon1999broad},  LymphomaCov1 from~\cite{alizadeh1998primal}, Reddit1500 and Reddit2000 from~\cite{dey2022solving}, LeukemiaCov, LymphomaCov2, and ProstateCov from~\cite{dettling2004bagboosting}, ArceneCov and DorotheaCov from~\cite{guyon2007competitive}, and GLI85Cov and GLABRA180Cov from~\cite{FeatureSelectionDatasets}.
We take the sample covariance matrices of features in these datasets as our input matrices $A$.
The dimensions of $A$ range from 500 to 100000.
Most of the matrices $A$ are dense and hence are not block-diagonal matrices, as discussed in \cref{sec:empirical_setting}.

\textbf{Baselines.} We use the state-of-the-art exact algorithm Branch-and-Bound~\citep{berk2019certifiably} and the state-of-the-art polynomial-time approximation algorithm (in terms of multiplicative factor) Chan's algorithm~\citep{chan2015worst} as our baselines.

\textbf{Notation.} 
We calculate the \emph{approximation error} between two methods using the formula $(\textup{obj}_{\textup{Base}} - \textup{obj}_{\textup{Ours}})/\textup{obj}_{\textup{Base}} \times 100\%$, where $\textup{obj}_{\textup{Base}}$ is the objective value obtained by a baseline algorithm, while $\textup{obj}_{\textup{Ours}}$ is the objective value obtained by our framework integrated with the baseline algorithm.
We calculate the \emph{speedup factor} by dividing the runtime of a baseline algorithm by the runtime of \cref{alg:BS cov} integrated with the baseline algorithm.

\textbf{Results.} In~\cref{tab:summary_datasets}, we report the performance of \cref{alg:BS cov} when integrated with Branch-and-Bound algorithm.
We summarized the performance results for $k=2, 5, 10, 15$ across various datasets.
For each dataset, we report the average approximation error, average speedup factors, and their corresponding standard deviations. 
Notably, the average speedup factor across all datasets is 100.50, demonstrating significant computational efficiency. 
Our theoretical analysis anticipates a trade-off between speedups and approximation errors; however, the empirical performance of our framework in terms of approximation accuracy is remarkably strong, with an average error of just 0.61\% across all datasets. 

\begin{table}[ht]
	\centering
	\caption{Summary of average approximation errors and average speedup factors for each dataset, compared to Branch-and-Bound. Standard deviations are reported in the parentheses.}
	\label{tab:summary_datasets}
	\begin{tabular}{@{}lccc@{}}
		\toprule
		Dataset        & Avg. Error (\%) (Std. Dev) & Avg. Speedup (Std. Dev) \\ \midrule
		CovColonCov    & 0.00 (0.00)                & 25.84 (38.69)           \\
		LymphomaCov1   & 0.00 (0.00)                & 22.64 (15.87)           \\
		Reddit1500     & 0.00 (0.00)                & 365.12 (238.24)         \\
		Reddit2000     & 0.06 (0.13)                & 278.69 (164.93)         \\
		LeukemiaCov    & 0.00 (0.00)                & 71.39 (83.17)           \\
		LymphomaCov2   & 4.82 (5.78)                & 6.95 (4.35)             \\
		ProstateCov    & 0.00 (0.00)                & 19.90 (30.82)           \\
		ArceneCov      & 0.00 (0.00)                & 13.50 (14.17)           \\
		Overall        & 0.61 (2.42)                & 100.50 (163.60)          \\ \bottomrule
	\end{tabular}
\end{table}

In \cref{tab:summary_datasets2} we report the performance of \cref{alg:BS cov}, in larger datasets with larger $k$'s, when integrated with Chan's algorithm~\citep{chan2015worst}.
We summarized the performance results for $k=200, 500, 1000, 2000$ across various datasets.
Chan's algorithm terminates in time $\mo({d^3})$ and returns an approximate solution to \ref{prob SPCA} with a multiplicative factor $\min\{\sqrt{k}, d^{1/3}\}$.
The average speedup factor is 6.00, and grows up to 14.51 as the dimension of the dataset grows, while the average error across all four datasets remains remarkably below 0\%, meaning that our framework finds better solutions than the standalone algorithm.

\begin{table}[ht]
	\centering
	\caption{Summary of average approximation errors and average speedup factors for each dataset, compared to Chan's algorithm. Standard deviations are reported in the parentheses. Datasets are sorted in ascending order according to their dimensions.}
	\label{tab:summary_datasets2}
	\begin{tabular}{lll}
		\toprule
		Dataset & Avg. Error (\%) (Std. Dev) & Avg. Speedup (Std. Dev) \\
		\midrule
		ArceneCov &          -0.08 (0.10) &             0.86 (0.28) \\
		GLI85Cov &          -1.06 (1.22) &             1.58 (0.21) \\
		GLABRA180Cov &          -0.19 (0.26) &             7.07 (0.62) \\
		DorotheaCov &          -2.30 (0.62) &            14.51 (0.50) \\
		Overall &          -0.91 (1.11) &             6.00 (5.66) \\
		\bottomrule
	\end{tabular}
\end{table}

These findings affirm the ability of our framework to handle diverse datasets and various problem sizes both efficiently and accurately. 
For a more detailed discussion, additional information on settings, and further empirical results, we refer readers to \cref{app:additional_empirical}.

\section{Discussion}
\label{sec:discussion}

In this paper, we address the Sparse PCA problem by introducing an efficient framework that accommodates any off-the-shelf algorithm designed for Sparse PCA in a plug-and-play fashion. 
Our approach leverages matrix block-diagonalization, thereby facilitating significant speedups when integrated with existing algorithms. 
The implementation of our framework is very easy, enhancing its practical applicability. 
We have conducted a thorough theoretical analysis of both the approximation error and the time complexity associated with our method. 
Moreover, extensive empirical evaluations on large-scale datasets demonstrate the efficacy of our framework, which achieves considerable improvements in runtime with only minor trade-offs in approximation error.

Looking ahead, an important direction for future research involves extending our framework to handle the Sparse PCA problem with multiple principal components, while maintaining similar approximation and runtime guarantees. 
This extension could further broaden the applicability of our approach, addressing more complex scenarios in real-world applications.

\section{Acknowledgements}
\label{sec:acknowledgements}
A. Del Pia and D. Zhou are partially funded by AFOSR grant FA9550-23-1-0433. Any opinions, findings, and conclusions or recommendations expressed in this material are those of the authors and do not necessarily reflect the views of the Air Force Office of Scientific Research.

\newpage
\bibliography{biblio}
\bibliographystyle{plainnat}

\newpage
\appendix
\section{Additional related work}
\label{sec:related work}

In this section, we discuss additional related work.
There exist many attempts in solving Sparse PCA in the literature, and we outlined only a selective overview of mainstream methods.
Algorithms require worst-case exponential runtimes, including mixed-integer semidefinite programs~\citep{bertsimas2022solving,li2020exact,cory2022sparse}, mixed-integer quadratic programs~\citep{dey2022solving}, non-convex programs~\citep{zou2006sparse,dey2022using}, and sub-exponential algorithms~\citep{ding2023subexponential}.
On the other hand, there also exist a number of algorithms that take polynomial runtime to approximately solving Sparse PCA.
To the best of our knowledge, the state-of-the-art approximation algorithm is proposed by \cite{chan2015worst}, in which an algorithm with multiplicative factor $\min\{\sqrt{k}, d^{-1/3}\}$ is proposed.
There also exists tons of other (approximation) algorithms that have polynomial runtime, including semidefinite relaxations~\citep{chowdhury2020approximation}, greedy algorithms~\citep{li2020exact}, low-rank approximations~\citep{papailiopoulos2013sparse}, and fixed-rank Sparse PCA~\citep{del2022sparse}.
Apart from the work we have mentioned, there also exists a streamline of work focusing on efficient estimation of sparse vectors in some specific statistical models.
For instance, for a statistical model known as \emph{Spiked Covariance model}, methods such as semidefinite relaxations~\citep{AmiWai08,d2020sparse} and covariance thresholding~\citep{deshp2016sparse}.
Finally, there are also local approximation algorithms designed for Sparse PCA, and iterative algorithms are proposed~\citep{yuan2013truncated,hager2016projection}.

The idea of decomposing a large-scale optimization problem into smaller sub-problems is a widely adopted approach in the optimization literature. 
For instance, \cite{mazumder2012exact} leverages the connection between the thresholded input matrix and the support graph of the optimal solution in graphical lasso problem (GL), enabling significant computational speedups by solving sub-problems.
Building on this, \cite{fattahi2019graphical} improves the computational efficiency of these subroutines by providing closed-form solutions under an acyclic assumption and analyzing the graph's edge structure. 
\cite{wang2023learning} addresses GL in Gaussian graphical models through bridge-block decomposition and derives closed-form solutions for these blocks.
In addition, several methods in the literature involve or are similar to matrix block-diagonalization.
For instance, \cite{feng2014robust} employ block-diagonalization for subspace clustering and segmentation problems by mapping data matrices into block-diagonal matrices through linear transformations. More recently, \cite{han2023hyperattention} introduce the sortedLSH algorithm to sparsify the softmax matrix, thus reducing the computational time of the attention matrix through random projection-based bucketing of ``block indices".
We remark that in all studies above, the underlying problems of interests, as well as the techniques, are different from ours, and the techniques developed therein cannot be directly applied to \ref{prob SPCA}. 
A more related work \citep{devijver2018block} explores block-diagonal covariance matrix estimation using hard thresholding techniques akin to ours.
However, our framework diverges in two critical aspects: (i) They assume data is drawn from a Gaussian distribution, while we do not - we show in \cref{sec:binary} that our method can by applied to \emph{any} data inputs; 
and (ii) they use maximum likelihood estimation, whereas our \cref{alg:BS cov} involves iterative binary search to determine the optimal threshold $\epsilon$.

\section{Deferred proofs}
In this section, we provide proofs that we defer in the paper.

\subsection{Proofs in \cref{sec:speedup}}

\propthreshold*

\begin{proof}[Proof of \cref{prop:threshold}]
    It is clear that \cref{alg:threshold procedure} has time complexity $\mathcal{O}(d^2)$, and it suffices to show that the output $\widetilde A$ of \cref{alg:threshold procedure} with input $(A, \epsilon)$ satisfies $\lbs\left(\widetilde A\right) = \intdim(A, \epsilon)$.

    Suppose that $A^\star$ is an $\epsilon$-approximation of $A$, and satisfies $\lbs\left(A^\star\right) = \intdim(A, \epsilon)$. 
    We intend to show $\lbs\left(\widetilde A\right) \le \lbs\left(A^\star\right)$.
    WLOG we assume that $A^\star\in \mathcal{B}(A, \epsilon)$ is a block-diagonal matrix with blocks $A_1^\star, A_2^\star, \ldots, A_p^\star$, i.e., $A^\star = \diag\left(A_1^\star, A_2^\star, \ldots, A_p^\star\right)$.
    Write $A = \begin{pmatrix}
        A_{11} & A_{12} & \ldots & A_{1p}\\
        A_{21} & A_{22} & \ldots & A_{2p}\\
        \vdots & \vdots & \vdots & \vdots\\
        A_{p1} & A_{p2} & \ldots & A_{pp}
    \end{pmatrix}$ and $\widetilde A = \begin{pmatrix}
        \widetilde A_{11} & \widetilde A_{12} & \ldots & \widetilde A_{1p}\\
        \widetilde A_{21} & \widetilde A_{22} & \ldots & \widetilde A_{2p}\\
        \vdots & \vdots & \vdots & \vdots\\
        \widetilde A_{p1} & \widetilde A_{p2} & \ldots & \widetilde A_{pp}
    \end{pmatrix}$, where $A_{ii}$, $\widetilde A_{ii}$, and $A^\star_i$ have the same number of rows and columns, by the definition of $\mathcal{B}(A, \epsilon)$, it is clear that $\infnorm{A_{ij}} \le \epsilon$, for any $1 \le i\ne j \le p$.
    This further implies that for any $1 \le i\ne j \le p$, $\widetilde A_{ij}$ is a zero matrix by the procedure described in \cref{alg:threshold procedure}, and therefore it is clear that $\lbs\left(\widetilde A\right) \le \lbs\left(A^\star\right)$. 
    By minimality of $A^\star$, we obtain that $\lbs\left(\widetilde A\right) = \lbs\left(A^\star\right)$.
\end{proof}

\subsubsection{Proofs in \cref{sec:approximation_gaurantees}}

\thmbdgap*
\begin{proof}[Proof of \cref{thm:block diagonal pca gap}]
    Note that
    \begin{align*}
        \abs{(x^\star)^\top A x^\star - \widetilde x^\top A \widetilde x} 
        &\le \abs{(x^\star)^\top A x^\star - \widetilde x^\top A^\epsilon \widetilde x} + \abs{ \widetilde x^\top (A^\epsilon - A) \widetilde x}\\
        &\le \abs{(x^\star)^\top A x^\star - \widetilde x^\top A^\epsilon \widetilde x} + k\cdot \infnorm{A - A^\epsilon},
    \end{align*}
    where the last inequality follows from Holder's inequality and the fact that $\onorm{\widetilde x}\le \sqrt{k}\cdot \norm{\widetilde x} = \sqrt{k}$.
    Therefore, it suffices to show that 
    \begin{align*}
        \abs{(x^\star)^\top A x^\star - \widetilde x^\top A^\epsilon \widetilde x}\le k\cdot \infnorm{A - A^\epsilon}.
    \end{align*}

    Observe that
    \begin{align*}
        \abs{(x^\star)^\top A x^\star - \widetilde x^\top A^\epsilon \widetilde x}
        & = \abs{\max_{\substack{\norm{x} = 1,\\ \znorm{x}\le k}} x^\top A x - \max_{\substack{\norm{x} = 1,\\ \znorm{x}\le k}} x^\top A^\epsilon x}\\
        & \le \max_{\substack{\norm{x} = 1,\\ \znorm{x}\le k}} \abs{x^\top (A^\epsilon - A) x}\\
        & \le k\cdot \infnorm{A - A^\epsilon},
    \end{align*}
    where the first inequality follows from the following facts:
    \begin{align*}
        &\quad \max_{\substack{\norm{x} = 1,\\ \znorm{x}\le k}} x^\top A x - \max_{\substack{\norm{x} = 1,\\ \znorm{x}\le k}} x^\top A^\epsilon x \le \max_{\substack{\norm{x} = 1,\\ \znorm{x}\le k}} x^\top (A - A^\epsilon) x\\
        &\quad \max_{\substack{\norm{x} = 1,\\ \znorm{x}\le k}} x^\top A^\epsilon x - \max_{\substack{\norm{x} = 1,\\ \znorm{x}\le k}} x^\top A x \le \max_{\substack{\norm{x} = 1,\\ \znorm{x}\le k}} x^\top (A^\epsilon - A) x
    \end{align*}
    Finally, combining the fact that $\infnorm{A - A^\epsilon}\le \epsilon$, we are done.
    \end{proof}

\thmbdspca*
\begin{proof}[Proof of \cref{thm:bd spca}]
        It is clear that $\opt \ge \max_{i\in [p]} \opt_i$.
        It remains to show the reverse, i.e., $\opt \le \max_{i\in [p]} \opt_i$.

        Suppose not, we have $\opt > \max_{i\in [p]} \opt_i$.
        Then, there must exist principal submatrices $\widetilde A_1', \widetilde A_2', \ldots, \widetilde A_p'$ contained in $\widetilde A_1, \widetilde A_2, \ldots, \widetilde A_p$ such that $\diag(\widetilde A_1', \widetilde A_2', \ldots, \widetilde A_p')$ is a $k\times k$ matrix, and
        \begin{align*}
            \opt = \max_{\substack{\norm{x} = 1}} x^\top \diag(\widetilde A_1', \widetilde A_2', \ldots, \widetilde A_p') x > \max_{i\in [p]} \opt_i.
        \end{align*}
        This gives a contradiction, as
        \begin{align*}
            \max_{i\in [p]} \opt_i 
            = \max_{i\in [p]}  \max_{\substack{\norm{x} = 1,\\ \znorm{x}\le k}} x^\top \widetilde A_i x
            \ge \max_{\substack{\norm{x} = 1}} x^\top \diag(\widetilde A_1', \widetilde A_2', \ldots, \widetilde A_p') x,
        \end{align*}
        where the inequality follows from the fact that 
        \begin{align*}
            \max_{\substack{\norm{x} = 1}} x^\top \diag(\widetilde A_1', \widetilde A_2', \ldots, \widetilde A_p') x
            = \max_{i \in [p]} \max_{\substack{\norm{x} = 1}} x^\top \widetilde A_i' x
            = \max_{i \in [p]} \max_{\substack{\norm{x} = 1\\ \znorm{x}\le k}} x^\top \widetilde A_i' x  = \max_{i\in [p]}\opt_i.
        \end{align*}
    \end{proof}

    \thmappalggap*

    \begin{proof}[Proof of \cref{thm:approximation_alg_gap}]
    In the proof, we use the same notation as in \cref{alg:operational}.
    We assume WLOG that the thresholded matrix $A^\epsilon$ is a block-diagonal matrix, i.e., $A^\epsilon = \diag(A^\epsilon_1, A^\epsilon_2, \dots, A^\epsilon_p)$, where $A^\epsilon_i \ldef A^\epsilon_{S_i, S_i}$. We also define $\widetilde A \ldef \diag\left( \widetilde A_1, \widetilde A_2, \dots, \widetilde A_p\right)$, with $\widetilde A_i \ldef A_{S_i, S_i}$. It is clear that $\infnorm{A - \widetilde A} \le \epsilon$.
    Denote by $\widetilde x^\star \in\R^d$ an optimal solution to \ref{prob SPCA} with input $(\widetilde A, k)$.
    By \cref{thm:bd spca}, one can assume WLOG that $\widetilde x^\star$ has zero entries for indices greater than $d_1\ldef |S_1|$.
    In other words, $\widetilde x^\star$ is ``found'' in the block $A_1$.
    Recall that we denote $x_i$ the solution found by $\mathcal{A}$ with input $\left( \widetilde A_i, k\right)$.
    Then, it is clear that
    \begin{align*}
         \max_{i\in [p]} x_i^\top \widetilde A_i  x_i 
        & \ge  x_1^\top \widetilde A_1  x_1\\
        &\ge \frac{1}{m(k, d_1)} \cdot (\widetilde x^\star)^\top \widetilde A \widetilde x^\star - a(k, d_1)\\
        &\ge \frac{1}{m(k, \intdim(A, \epsilon))} \cdot (\widetilde x^\star)^\top \widetilde A \widetilde x^\star - a(k, \intdim(A, \epsilon)).
    \end{align*}
    Recall that $y_i$ is the reconstructed solution.
    By the proof of \cref{thm:block diagonal pca gap}, one obtains that 
    \begin{equation}
    \label{eqn:approx_guarantee}
        \begin{aligned}
        & \quad \max_{i\in [p]} y_i^\top A y_i \\
        & = \max_{i\in [p]} x_i^\top \widetilde A_i x_i\\
        & \ge \frac{1}{m(k, \intdim(A, \epsilon))} \cdot (\widetilde x^\star)^\top \widetilde A \widetilde x^\star - a(k, \intdim(A, \epsilon))\\
        & \ge  \frac{1}{m(k, \intdim(A, \epsilon))} \cdot \left( (x^\star)^\top A x^\star - k\infnorm{A - \widetilde A}\right) - a(k, \intdim(A, \epsilon)).
    \end{aligned}
    \end{equation}
\end{proof}

\subsubsection{Proofs in \cref{sec:computational_speedups}}

\proptimecomp*

\begin{proof}[Proof of \cref{prop:time_complexity}]
    Suppose that the largest block size of $\widetilde A\in \mathcal{B}(A, \epsilon)$ is equal to  $\intdim(A, \epsilon)$, and $\widetilde A$ can be sorted to a block-diagonal matrix with $p$ blocks $\widetilde A_1, \widetilde A_2, \ldots, \widetilde A_p$ and block sizes $d_1, d_2, \ldots, d_p$, respectively.
    WLOG we assume that $p < d$, otherwise the statement holds trivially.
    It is clear that the time complexity of running line 5 - 8 in \cref{alg:operational} is upper bounded by a constant multiple of
    \begin{align}
    \label{eqn:time_comp_opt}
       \max_{\substack{1\le d_i\le \intdim(A, \epsilon)\\ \sum_{i = 1}^m d_i = d}} \sum_{i = 1}^p g(k, d_k).
    \end{align}
    Since $k$ is fixed, the optimization problem \eqref{eqn:time_comp_opt} is maximizing a convex objective function over a polytope 
    \begin{align*}
        P\ldef \left\{(d_1, d_2, \ldots, d_p)\in\R^p: 1\le d_i \le \intdim(A,\epsilon), \ \forall i\in[p], \ \sum_{i = 1}^p d_i = d \right\}.
    \end{align*}
    Therefore, the optimum to \eqref{eqn:time_comp_opt} is obtained at an extreme point of $P$.
    It is clear that an extreme point $(d_1^\star, d_2^\star, \ldots, d_p^\star)$ of $P$ is active at a linearly independent system, i.e., among all $d_i^\star$'s, there are $p-1$ of them must be equal to $1$ or $\intdim(A,\epsilon)$, and the last one is taken such that the euqality $\sum_{i = 1}^p d_i^\star = d$ holds.
    Since there could be at most $\lfloor d / \intdim(A, \epsilon) \rfloor$ many blocks in $\widetilde A$ with block size equal to $\intdim(A, \epsilon)$, and that $g(k,d)$ is increasing with respect to $d$, we obtain that the optimum of the optimization problem~\eqref{eqn:time_comp_opt} is upper bounded by
    \begin{align*}
        &\quad \left\lceil \frac{d}{\intdim(A, \epsilon)} \right\rceil \cdot g\left(k,\intdim(A, \epsilon)\right)
        + \left(p - \left\lfloor \frac{d}{\intdim(A, \epsilon)} \right\rfloor\right)\\
        & < \left\lceil \frac{d}{\intdim(A, \epsilon)} \right\rceil \cdot g\left(k,\intdim(A, \epsilon)\right) + d.
    \end{align*}
    We are then done by noticing the fact that lines 3 - 4 in \cref{alg:operational} have a runtime bounded by $\mathcal{O}(d^2)$.
\end{proof}

\subsection{Proofs in \cref{sec:statistical}}

To establish a connection with robust statistics, we begin by expressing each element of the input matrix $A$ in \cref{model:bd} as follows:
\begin{align*}
A_{ij} = \widetilde{A}{ij} + E{ij} \ldef \widetilde{A}{ij} + \sigma^2 Z{ij}, \quad i\le j,
\end{align*}
where $Z_{ij}$ is an i.i.d.~centered $(\varrho^2 / \sigma^2)$-sub-Gaussian random variable with unit variance. 
Considering that $\widetilde{A}$ is block-diagonal and, according to (ii) in \cref{model:bd}, has at most $\lceil d/d^\star\rceil \cdot (d^\star)^2 = \mathcal{O}(d^{1 + \alpha})$ nonzero entries, the methodology proposed by \cite{comminges2021adaptive} can be employed to estimate $\sigma^2$ accurately using \cref{alg:var est}. 

Inspired by the methods proposed in \cite{comminges2021adaptive}, we show that the number $\bar \sigma$ found in \cref{alg:var est} could provide an constant approximation ratio to the true variance $\sigma^2$ in \cref{model:bd} with high probability:
\begin{proposition}[adapted from Proposition~1 in \cite{deshp2016sparse}]
\label{prop:robust_stat}
    Consider \cref{model:bd}, and denote $\mathcal{P}_Z$ to be the distribution of the random variable $E_{ij} / \sigma^2$.
    Let $\bar \sigma^2$ be the same number found in \cref{alg:var est} with input tuple $(A, C, \alpha)$.
    There exist an absolute constant $c>0$ such that for the dimension $d$ of the input matrix $A$ satisfying 
    \begin{align*}
    	\log{d} > \log\left[(4C+4)\cdot \left(\frac{\log{(8C+8)}}{\frac{c\sigma^2}{2\varrho^2}\cdot\left(\frac{\sigma^2}{2\varrho^2} + 1\right)} + 1\right)\right] / ( 1 - \alpha )
    \end{align*}
    we have that for distribution $\mathcal{P}_Z$ of $Z_{ij}$, the variance $\sigma^2$, one has a uniform lower bound on the accuracy of estimation:
    \begin{align*}
     \inf_{\mathcal{P}_Z}\inf_{ \sigma>0} \inf_{\|\widetilde A\|_0 \leq Cd^{1+\alpha}} \prob_{\mathcal{P}_Z, \sigma, \widetilde A}\left( \frac{1}{2} \leq \frac{\bar{\sigma}^2}{\sigma^2} \leq \frac{3}{2} \right) &\geq 1 - \exp\left\{-\frac{d^{\alpha + 1}}{4C+4}\right\}.
    \end{align*}
\end{proposition}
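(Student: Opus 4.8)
The plan is to recognize \cref{alg:var est} as a \emph{median-of-means} estimator of the noise variance $\sigma^2$ and to certify its accuracy by combining a counting argument for the contaminated buckets with a sub-exponential concentration bound on the clean buckets. First I would observe that, writing $A_{ij} = \widetilde A_{ij} + E_{ij}$, every index pair $(i,j)$ with $\widetilde A_{ij} = 0$ contributes a ``clean'' sample $A_{ij}^2 = E_{ij}^2$ with $\me[E_{ij}^2] = \sigma^2$, whereas the nonzero entries of $\widetilde A$ act as arbitrary contamination. Since $\widetilde A$ is block-diagonal with $\znorm{\widetilde A}\le Cd^{1+\alpha}$, at most $Cd^{1+\alpha}$ of the $m = \lfloor (2C+2)d^{1+\alpha}\rfloor$ buckets $B_1,\dots,B_m$ contain a contaminated entry. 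Hence at least $m - Cd^{1+\alpha}$ buckets are clean, and because $m \approx (2C+2)d^{1+\alpha} > 2Cd^{1+\alpha}$, the clean buckets form a strict majority; this slack of roughly $2d^{1+\alpha}$ is precisely what will drive the final exponent.

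Next I would bound the per-bucket failure probability $q \ldef \prob(S_i \notin [\tfrac12\sigma^2, \tfrac32\sigma^2])$ for a clean bucket. Here $S_i$ is the average of $|B_i| \ge \lfloor (d^2+d)/(2m)\rfloor \approx d^{1-\alpha}/(4C+4)$ i.i.d.\ copies of $E_{ij}^2$, and $E_{ij}^2 - \sigma^2$ is centered sub-exponential with parameters controlled by $\varrho^2$. A Bernstein-type inequality at deviation level $\sigma^2/2$ then yields $q \le \exp\{-c'\,|B_i|\}$, where, following \cite{comminges2021adaptive}, the rate $c'$ is a constant multiple of $\tfrac{\sigma^2}{2\varrho^2}\big(\tfrac{\sigma^2}{2\varrho^2}+1\big)$; substituting the lower bound on $|B_i|$ and invoking the hypothesis on $\log d$ is exactly what forces $q \le \tfrac{1}{8C+8}e^{-1/(4C+4)}$. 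Because this rate depends only on the ratio $\varrho^2/\sigma^2$ and the sparsity bound $\znorm{\widetilde A}\le Cd^{1+\alpha}$, the estimate is uniform over $\mathcal{P}_Z$, $\sigma$, and $\widetilde A$, which is what the triple infimum in the statement demands.

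Finally I would run the median-of-means argument. The median $\bar\sigma^2$ lies outside $[\tfrac12\sigma^2,\tfrac32\sigma^2]$ only if more than $m/2$ of the $S_i$ do; since at most $Cd^{1+\alpha}$ of these are contaminated, this forces at least $t \ldef \lceil m/2\rceil - Cd^{1+\alpha} \approx d^{1+\alpha}$ of the \emph{clean} buckets to fail simultaneously. As distinct clean buckets are independent, a union bound over $t$-subsets gives
\begin{align*}
\prob\big(\bar\sigma^2 \notin [\tfrac12\sigma^2,\tfrac32\sigma^2]\big) \le \binom{m}{t} q^{t} \le \big((8C+8)\,q\big)^{t},
\end{align*}
using $\binom{m}{t}\le (em/t)^t \le (8C+8)^t$ from $m/t \approx 2C+2$. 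With $q \le \tfrac{1}{8C+8}e^{-1/(4C+4)}$ from the previous step and $t \approx d^{1+\alpha}$, the right-hand side is at most $\exp\{-d^{1+\alpha}/(4C+4)\}$, as claimed.

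I expect the main obstacle to be the constant bookkeeping in the last two steps: getting the sub-exponential Bernstein rate to emerge as the exact factor $\tfrac{c\sigma^2}{2\varrho^2}\big(\tfrac{\sigma^2}{2\varrho^2}+1\big)$ that appears in the hypothesis on $\log d$, and then threading the count of contaminated buckets through the binomial tail so that the threshold $t$, the binomial-coefficient bound, and the per-bucket probability $q$ conspire to yield precisely the exponent $d^{1+\alpha}/(4C+4)$. The conceptual median-of-means robustness mechanism is standard (cf.\ \cite{comminges2021adaptive,deshp2016sparse}); the real work lies in matching these constants.
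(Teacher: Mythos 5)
Your proposal is correct and follows the same overall architecture as the paper's proof: split the buckets into clean ones (those containing only indices where $\widetilde A$ vanishes, of which there are at least $m - Cd^{1+\alpha}$) and contaminated ones, apply a Bernstein-type inequality for sub-exponential variables to get a per-bucket failure probability $q \le 2\exp\{-c\min(\sigma^4/(4\varrho^4),\,\sigma^2/(2\varrho^2))\,|B_i|\}$ on the clean buckets, and observe that the median can escape $[\sigma^2/2,\,3\sigma^2/2]$ only if at least $t \approx m/2 - Cd^{1+\alpha} \approx d^{1+\alpha}$ clean buckets fail simultaneously. The one place you genuinely diverge is the final concentration step: the paper centers the failure indicators over the clean buckets and applies Hoeffding's inequality, obtaining $\exp\{-2t'^2/|J|\}$ with $t' \ge d^{1+\alpha}/2$ and $|J| \le (2C+2)d^{1+\alpha}$, whereas you bound the same binomial tail by a union bound over $t$-subsets, $\binom{m}{t}q^t \le ((8C+8)q)^t$. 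Both routes invoke the hypothesis on $\log d$ for the same purpose --- forcing $q$ below roughly $1/(8C+8)$ --- and both land on the exponent $d^{1+\alpha}/(4C+4)$; your route asks for the marginally stronger condition $(8C+8)q \le e^{-1/(4C+4)}$, but this is absorbed into the absolute constant $c$ appearing in the hypothesis, so nothing is lost. The constant bookkeeping you flag as the main obstacle is real but routine, and indeed the paper's own statement is already loose there (its displayed condition uses the rate $\tfrac{c\sigma^2}{2\varrho^2}(\tfrac{\sigma^2}{2\varrho^2}+1)$ while its proof's Bernstein bound produces $c\min(\tfrac{\sigma^4}{4\varrho^4},\tfrac{\sigma^2}{2\varrho^2})$), so you should not expect your constants to match the statement exactly either.
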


\begin{proof}[Proof of \cref{prop:robust_stat}]
    Recall that we write
    \begin{align*}
        A_{ij} = \widetilde{A}{ij} + E{ij} \ldef \widetilde{A}{ij} + \sigma^2 Z{ij}, \ i\le j,
    \end{align*}
    where $Z_{ij}$ is an i.i.d.~centered $(\varrho^2 / \sigma^2)$-sub-Gaussian random variable with unit variance.
    Since $d^\star \le C d^\alpha$ for some $\alpha \in (0, 1)$, WLOG we assume that $d^\star = C d^\alpha$ for some $C > 0$, and for simplicity we assume that $d^{1+\alpha}$ and $C$ are all integers to avoid countless discussions of integrality.
    By \cref{alg:var est}, one divides the index set $\{(i, j)\in [d]\times [d]: i\le j\}$ into $m =  (2C+2) d^{1+\alpha}$ disjoint subsets $B_1, B_2, \ldots, B_m$, and each $B_i$ has cardinality at lease $k:=\lfloor (d^2 + d) / (2m)\rfloor$.
    Denote the set $J$ to be the set of indices $i$ such that the set $B_i$ contains only indices such that for any $(k, l)\in B_i$, $\widetilde A_{kl} = 0$.
    It is clear that $|J|$ is lower bounded by $m - Cd^{\alpha + 1} = (2C+1)d^{1+\alpha}$.

    By Bernstein's inequality (see, e.g., Corollary~2.8.3 in \cite{vershynin2018high}), it is clear that there exists an absolute constant $c>0$ such that
    \begin{align*}
        \prob\left( \abs{\frac{1}{|B_i|} \sum_{(k, l)\in B_i} E_{kl}^2 - \sigma^2} > \frac{\sigma^2}{2}  \right)
        & = \prob\left( \abs{\frac{1}{|B_i|} \sum_{(k, l)\in B_i}  Z_{kl}^2 - 1} > \frac{1}{2}  \right)\\
        &\le 2\exp\left\{-c \min\left(\frac{\sigma^4}{4\varrho^4}, \frac{\sigma^2}{2\varrho^2}\right) |B_i|\right\}\\
        & \le 2\exp\left\{-c \min\left(\frac{\sigma^4}{4\varrho^4}, \frac{\sigma^2}{2\varrho^2}\right) \left\lfloor \frac{d^2 + d}{2m}\right\rfloor\right\}.
    \end{align*}

    Then, we denote $I \ldef [\sigma^2 / 2, 3\sigma^2 / 2]$, denote $A_i$ to be the random event $\left\{\frac{1}{|B_i|} \sum_{(k, l)\in B_i} A_{kl}^2 \not\in I\right\}$, and denote $\dsone_{A_i}$ to be the indicator function of $A_i$.
    One obtains that
    \begin{align*}
        \prob\left(\bar \sigma^2\not\in I \right)
        &\le \prob\left( \sum_{i = 1}^m  \dsone_{A_i} \ge \frac{m}{2}\right)\\
        & = \prob\left( \sum_{i \in J}  \dsone_{A_i} \ge \frac{m}{2} - \sum_{i\not\in J} \dsone_{A_i} \right)\\
        &\le \prob\left( \sum_{i \in J}  \dsone_{A_i} \ge \frac{m}{2} - Cd^{\alpha + 1}\right)
    \end{align*}
    Denote
    \begin{align*}
        t\ldef \frac{m}{2} - Cd^{\alpha + 1} - 2 |J| \exp\left\{-c \min\left(\frac{\sigma^4}{4\varrho^4}, \frac{\sigma^2}{2\varrho^2}\right)  \left\lfloor \frac{d^2 + d}{2m}\right\rfloor\right\},
    \end{align*}
    and by Hoeffding's inequality, one obtains that
    \begin{align*}
        &\quad \prob\left( \sum_{i \in J}  \dsone_{A_i} \ge \frac{m}{2} - Cd^{\alpha + 1}\right)\\
        & \le \prob\left( \sum_{i \in J}  (\dsone_{A_i} - \me \dsone_{A_i} ) \ge \frac{m}{2} - Cd^{\alpha + 1} - 2 |J| \exp\left\{-c \min\left(\frac{\sigma^4}{4\varrho^4}, \frac{\sigma^2}{2\varrho^2}\right) \left\lfloor \frac{d^2 + d}{2m}\right\rfloor \right\} \right)\\
        & = \prob\left( \sum_{i \in J}  (\dsone_{A_i} - \me \dsone_{A_i} ) \ge t   \right)\\
        &\le \exp\left\{ -\frac{2t^2}{|J|}     \right\}
    \end{align*}
    Since $m = (2C + 2)d^{\alpha + 1} $, it is clear that $m\ge |J|\ge (2C + 1)d^{\alpha + 1}$, and 
    \begin{align*}
        t & \ge d^{\alpha + 1} \left(1 - 2(2C+2)\exp\left\{-c \min\left(\frac{\sigma^4}{4\varrho^4}, \frac{\sigma^2}{2\varrho^2}\right) \lfloor d^{1-\alpha} / (4C+4)\rfloor\right\}\right)
        \ge  d^{\alpha + 1}\cdot\frac{1}{2},
    \end{align*}
    where the last inequality follows from the fact that 
    \begin{align*}
         \log{d} > \log\left[(4C+4)\cdot \frac{\log{(8C+8)}}{\frac{c\sigma^2}{2\varrho^2}\cdot\left(\frac{\sigma^2}{2\varrho^2} + 1 \right)} \right] / ( 1 - \alpha ).
    \end{align*}
    Hence, the probability of $\bar \sigma^2\not\in I$ is upper bounded by $\exp\{-d^{\alpha + 1}/(4C+4)\}$.
\end{proof}

\propepsest*
\begin{proof}[Proof of \cref{prop:inf_norm_bound_of_E}]
    We first show that, the probability that $ \infnorm{E} \le \bar \epsilon$ is at least $1 - 2d^{-1} - \exp\left\{2\log{d}-\frac{d^{1+\alpha}}{4C+4}\right\}$.
    This comes directly from \cref{prop:robust_stat} that $\bar \sigma^2$ satisfies that $\frac{1}{2}\le \frac{\bar \sigma^2}{\sigma^2} \le \frac{3}{2}$ with probability at least $1 - \exp\{-d^{1+\alpha} / (4C+4)\}$, the definition of $E_{kl}$ being i.i.d.~$\varrho$-sub-Gaussian variables, and a union bound argument.
    Indeed, define the random event $A\ldef\{\frac{1}{2}\le \frac{\bar \sigma^2}{\sigma^2} \le \frac{3}{2}\}$, one can see that
    \begin{align*}
     \prob\left(|E_{kl}| > \bar\epsilon\right) 
        & = \prob\left(|E_{kl}| > \bar\epsilon \mid A\right) \prob(A) + \prob\left(|E_{kl}| > \bar\epsilon \mid A^c\right) \prob(A^c)\\
        & = \prob\left( |E_{kl}| > 2\sqrt{3} u \bar\sigma \sqrt{\log{d}} \mid A \right) \prob(A) 
        + \prob\left(|E_{kl}| > \bar\epsilon \mid A^c\right) \prob(A^c)\\
        & \le \prob\left( |E_{kl}| > \sqrt{6}u \sigma \sqrt{\log{d}} \mid A \right) \prob(A) +  \prob(A^c)\\
        &  \le \prob\left( |E_{kl}| > \sqrt{6}u\sigma \sqrt{\log{d}} \right) + \exp\left\{-\frac{d^{1+\alpha}}{4C+4}\right\}\\
        &\le 2\exp\left\{-\frac{6u^2\sigma^2}{2\varrho^2}\log{d}\right\} + \exp\left\{-\frac{d^{1+\alpha}}{4C+4}\right\}\\
        & = 2d^{-3} + \exp\left\{-\frac{d^{1+\alpha}}{4C+4}\right\}.
    \end{align*}
    By union bound, one has
    \begin{align*}
    \prob\left(\infnorm{E} > \bar \epsilon\right) 
         \le d^2 \cdot \prob\left(|E_{kl}| > \bar\epsilon\right)
        \le 2d^{-1} + \exp\left\{2\log{d}-\frac{d^{1+\alpha}}{4C+4}\right\}.
    \end{align*}
    This implies that, with high probability, $\intdim\left(A, \bar\epsilon\right) \le d^\star$.
    In the remainder of the proof, we will assume that the random event $\infnorm{E} \le \bar \epsilon$ holds true.
    
    We are now ready to show the desired approximation error. Denote by $x^{\star}$ the optimal solution to \ref{prob SPCA} with input $(A, k)$, by the proof of \cref{thm:block diagonal pca gap}, it is clear that 
    \begin{align*}
        (x^\star)^\top  A x^\star \ge (\widetilde x^\star)^\top \widetilde A \widetilde x^\star - k\bar\epsilon,
    \end{align*}
    due to the fact that $\infnorm{\widetilde A - A} \le \bar\epsilon$.
    Combining with \cref{thm:approximation_alg_gap}, we have
    \begin{align*}
        y^\top \widetilde A y &\ge y^\top A y - k\cdot \bar\epsilon\\
         &\ge \frac{(x^\star)^\top A x^\star}{m\left(k, \intdim(A, \epsilon)\right)}   - a\left(k, \intdim(A, \epsilon)\right) - \left( 1 + \frac{1}{m\left(k, \intdim(A, \epsilon)\right)} \right) \cdot k\bar\epsilon\\
        &\ge \frac{(\widetilde x^\star)^\top \widetilde A \widetilde x^\star}{m\left(k, \intdim(A, \epsilon)\right)}   - a\left(k, \intdim(A, \epsilon)\right) - \left(1 + \frac{2}{m\left(k, \intdim(A, \epsilon)\right)}\right) \cdot k \bar\epsilon.
    \end{align*}

    Finally, we obtain the desired runtime complexity via \cref{prop:time_complexity}.
\end{proof}

\subsection{Proofs in \cref{sec:binary}}
\thmbinarysearch*
\begin{proof}[Proof of \cref{thm:binary_search}]
		We will use \cref{thm:approximation_alg_gap,prop:time_complexity} to prove this theorem.
		
		We first prove the approximation bound. 
		First note that $\infnorm{A - A^{\epsilon^\star}} \le \epsilon^\star$. 
		By \cref{thm:approximation_alg_gap},
		\begin{align*}
			\opt(\epsilon^\star) &\ge \frac{(x^\star)^\top A x^\star}{m\left(k, \intdim(A, \epsilon^\star)\right)}   - a\left(k, \intdim(A, \epsilon^\star)\right) - \frac{1}{m\left(k, \intdim(A, \epsilon^\star)\right)} \cdot k\epsilon^\star \\
			& = \frac{\opt(0)}{m\left(k, \intdim(A, \epsilon^\star)\right)}   - a\left(k, \intdim(A, \epsilon^\star)\right) - \frac{1}{m\left(k, \intdim(A, \epsilon^\star)\right)} \cdot k\epsilon^\star.
		\end{align*}
		Note that in \cref{alg:BS cov}, the algorithm is guaranteed to terminate with an $\epsilon^\star$ such that $\intdim(A, \epsilon^\star) \le d_0$.
		Since the functions $m(k,d)$ and $a(k,d)$ are all non-decreasing with respect to $d$, we have that $m\left(k, \intdim(A, \epsilon^\star)\right) \le m(k, d_0)$ and $a\left(k, \intdim(A, \epsilon^\star)\right) \le a(k, d_0)$, and therefore we obtain our desired approximation bound
		\begin{align*}
			\opt(\epsilon^\star) &\ge \frac{\opt(0)}{m\left(k, d_0\right)}   - a\left(k, d_0\right) - \frac{1}{m\left(k, d_0\right)} \cdot k\epsilon^\star.
		\end{align*}
		Here, the inequality holds due to the fact that (i) if $\opt(0) \le k\epsilon^\star$, then since $\opt(\epsilon^\star)$ is non-negative, the inequality trivially holds; (ii) if $\opt(0) > k\epsilon^\star$, then the inequality also holds since $a(k,d)$ and $m(k,d)$ are non-decreasing with respect to $d$.
		
		Then, we show the time complexity. 
		By \cref{prop:time_complexity}, it is clear that the runtime for a single iteration in the while loop in \cref{alg:BS cov} is upper bounded by 
		\begin{align}
			\label{eqn:time_complexity_bound}
			\mathcal{O}\left(\left\lceil \frac{d}{\intdim(A, \epsilon)} \right\rceil \cdot g\left(k,\intdim(A, \epsilon)\right) + d^2\right),
		\end{align}
		for some $\epsilon \ge 0$ such that $\intdim(A,\epsilon) \le d_0$.
		Let $\epsilon_1 \ge \epsilon_2$ be two positive numbers, and it is clear that $\intdim(A, \epsilon_1) \le \intdim(A, \epsilon_2)$. 
		Since 
		\begin{align*}
			\intdim(A, \epsilon_1) = \frac{\intdim(A, \epsilon_1)}{\intdim(A, \epsilon_2)} \cdot \intdim(A, \epsilon_2) + \left(1 - \frac{\intdim(A, \epsilon_1)}{\intdim(A, \epsilon_2)}\right) \cdot 0,
		\end{align*}
		and by convexity of $g(k,d)$, along with the fact that $g(k,0) = 0$, we obtain that
		\begin{align*}
			g(k, \intdim(A, \epsilon_1)) \le \frac{\intdim(A, \epsilon_1)}{\intdim(A, \epsilon_2)} \cdot g(k, \intdim(A, \epsilon_2)) + 0.
		\end{align*}
		This implies that for any $\epsilon \ge 0$ such that $\intdim(A, \epsilon) \le d_0$, we have that
		\begin{align*}
			\frac{d}{\intdim(A, \epsilon_1)} \cdot g(k, \intdim(A, \epsilon_1)) \le \frac{d}{d_0} \cdot g(k, d_0),
		\end{align*}
		and thus
		\begin{align*}
			&\quad \left\lceil \frac{d}{\intdim(A, \epsilon)}  \right\rceil \cdot g(k, \intdim(A,\epsilon)) \\
			&\le  \frac{d}{\intdim(A, \epsilon)} \cdot g(k, \intdim(A, \epsilon)) + g(k, \intdim(A,\epsilon))\\
			& \le \frac{d}{d_0} \cdot g(k, d_0) + g(k, \intdim(A,\epsilon))\\
			& \le \frac{d}{d_0} \cdot g(k, d_0) + g(k, d_0) 
			\le 2 \left\lceil \frac{d}{d_0} \right\rceil \cdot g(k, d_0).
		\end{align*}
		Combining with \eqref{eqn:time_complexity_bound}, and the fact that at most $\mathcal{O}(\log(\infnorm{A} / \delta))$ iterations would be executed in the while loop in \cref{alg:BS cov}, we are done.
\end{proof}

\section{Additional empirical settings and results}
\label{app:additional_empirical}

In this section, we report additional empirical settings in \cref{sec:empirical_setting} and detailed empirical results in \cref{app:empirical_bb,app:empirical_chan}.
We report empirical results under \cref{model:bd} in \cref{app:synthetic}.
In \cref{app:parameters}, we discuss impacts of parameters $d_0$ and $\delta$ in \cref{alg:BS cov}.

\subsection{Empirical Settings}
\label{sec:empirical_setting}

\textbf{Datasets.} We summarize the dimensions and percentage of non-zero entries in the input matrices in \cref{tab:dimensions_nonzero_percentage}. 
It is important to note that more than 70\% of these input covariance matrices are dense. 
We utilize these matrices to evaluate the practicality of our approach, demonstrating its effectiveness irrespective of whether the input matrix has a block-diagonal structure.

\begin{table}[ht]
\centering
\caption{Dimensions for each dataset and percentage of non-zero entries.}
\label{tab:dimensions_nonzero_percentage}
\begin{tabular}{@{}lcc@{}}
\toprule
Dataset        & Dimension ($d$) & Percentage of non-zero entries \\
\midrule
CovColonCov    & 500    & 100\%  \\
LymphomaCov1   & 500    & 100\%  \\
Reddit1500     & 1500   & 5.30\% \\
Reddit2000     & 2000   & 6.20\% \\
LeukemiaCov    & 3571   & 100\%  \\
LymphomaCov2   & 4026   & 100\%  \\
ProstateCov    & 6033   & 100\%  \\
ArceneCov      & 10000  & 100\%  \\
GLI85Cov      & 22283  & 100\%  \\
GLABRA180Cov      & 49151  & 100\%  \\
DorotheaCov      & 100000  & 77.65\%  \\
\bottomrule
\end{tabular}
\end{table}

\textbf{Parameters.}
(i) When integrated \cref{alg:BS cov} with Branch-and-Bound algorithm: 
We choose the parameter $\bar\epsilon \ldef 1$, $a \ldef 0$, $b \ldef \infnorm{A}$, $d_0 \ldef 30$, and $\delta \ldef 0.01\cdot \infnorm{A}$.
(ii) When integrated \cref{alg:BS cov} with Chan's algorithm: We choose the parameter $\bar\epsilon \ldef 1$, $a \ldef 0$, $b \ldef \infnorm{A}$, $d_0 \ldef 2k$ (twice the sparsity constant), and $\delta \ldef 0.01\cdot \infnorm{A}$.

\textbf{Early Stopping.}
For experiments conducted with a positive semidefinite input matrix $A$, we add a very simple step in \cref{alg:BS cov} - we set an extra stopping criteria on \cref{alg:BS cov}, which breaks the while loop as soon as a problem with largest block size $d_0$ has been solved.
This additional step helps further reduce computational time of \cref{alg:BS cov}.

\textbf{Compute resources.} 
We conducted all tests on a computing cluster equipped with 36 Cores (2x 3.1G Xeon Gold 6254 CPUs) and 768 GB of memory.

\subsection{Empirical results when integrated with Branch-and-Bound algorithm}
\label{app:empirical_bb}
In this section, we provide the detailed empirical results of our framework when integrated with Branch-and-Bound algorithm (previously summarized in \cref{tab:summary_datasets}), in \cref{tab:comparison}.
We provide statistics related to the best solution in \cref{tab:more_stats_BB}, including the ratio of corresponding $\epsilon^\star$ (the best threshold found) and $\infnorm{A}$, the percentage of zeros in the matrix $A^{\epsilon^\star}$, and the \emph{Jaccard index} between the solution obtained by baseline algorithm and that obtained by \cref{alg:BS cov}, which is defined as
\begin{align*}
    \textup{Jaccard index}\ldef \frac{|\supp(x_{\text{Base}}) \cap \supp(x_{\text{Ours}})|}{|\supp(x_{\text{Base}}) \cup \supp(x_{\text{Ours}})|}.
\end{align*}
Here, $x_{\text{Base}}$ denotes the solution obtained by the baseline algorithm, which refers to Branch-and-Bound algorithm in this section and Chan's algorithm in \cref{app:empirical_chan}, and $x_{\text{Ours}}$ denotes the solution obtained by \cref{alg:BS cov} when integrated with the baseline algorithm.
We also summarize the average approximation errors and average speedup factors for with respect to $k$ in \cref{tab:summary_k}.

\begin{table}[!t]
	\centering
	\caption{Comparison of runtime and objective values on real-world datasets, between Branch-and-Bound and \cref{alg:BS cov} integrated with Branch-and-Bound. $d$ is the dimension of Sparse PCA problem, and $k$ is the sparsity constant. We set the time limit to 3600 seconds for each method. Approximation errors between two methods, and speedup factors are reported. We note that the objective value for ArceneCov is scaled by $10^5$.}
	\label{tab:comparison}
	\begin{tabular}{@{}lcccccccc@{}} %
		\toprule
		Dataset & $d$ & $k$ & \multicolumn{2}{c}{Branch-and-Bound} & \multicolumn{2}{c}{Ours} & Error (\%) & Speedup \\
		\cmidrule(lr){4-5} \cmidrule(lr){6-7}
		& & & Time & Objective & Time(s) & Objective & & \\
		\midrule
		CovColonCov    & 500 & 3       & 8         & 1059.49        & 1.20        & 1059.49         & 0.00       & 6.67    \\
		CovColonCov    & 500 & 5       & 8         & 1646.45        & 1.08       & 1646.45         & 0.00       & 7.41    \\
		CovColonCov    & 500 & 10      & 3600      & 2641.23        & 42.93      & 2641.23         & 0.00       & 83.86   \\
		CovColonCov    & 500 & 15      & 3600      & 3496.6         & 666.95     & 3496.6          & 0.00       & 5.40    \\
		LymphomaCov1   & 500 & 3       & 11        & 2701.61        & 1.32       & 2701.61         & 0.00       & 8.33    \\
		LymphomaCov1   & 500 & 5       & 44        & 4300.5         & 1.51       & 4300.5          & 0.00       & 29.14   \\
		LymphomaCov1   & 500 & 10      & 3600      & 6008.74        & 85.72      & 6008.74         & 0.00       & 42.00   \\
		LymphomaCov1   & 500 & 15      & 3600      & 7628.66        & 324.6      & 7628.66         & 0.00       & 11.09   \\
		Reddit1500     & 1500 & 3      & 75        & 946.12         & 3.68       & 946.12          & 0.00       & 20.38   \\
		Reddit1500     & 1500 & 5     & 3600      & 980.97         & 6.86       & 980.97          & 0.00       & 524.78  \\
		Reddit1500     & 1500 & 10     & 3600      & 1045.74        & 9.20        & 1045.74         & 0.00       & 391.30  \\
		Reddit1500     & 1500 & 15     & 3600      & 1082.12        & 6.87       & 1082.12         & 0.00       & 524.02  \\
		Reddit2000     & 2000 & 3      & 412       & 1311.36        & 8.70        & 1311.36         & 0.00       & 47.36   \\
		Reddit2000     & 2000 & 5      & 3600      & 1397.36        & 9.03       & 1397.36         & 0.00       & 398.67  \\
		Reddit2000     & 2000 & 10     & 3600      & 1523.82        & 13.18      & 1523.82         & 0.00       & 273.14  \\
		Reddit2000     & 2000 & 15     & 3600      & 1605.48        & 9.10        & 1601.32         & 0.26       & 395.60  \\
		LeukemiaCov    & 3571 & 3      & 280       & 7.13           & 18.89      & 7.13            & 0.00       & 14.82   \\
		LeukemiaCov    & 3571 & 5      & 3600      & 10.29          & 18.64      & 10.29           & 0.00       & 193.13  \\
		LeukemiaCov    & 3571 & 10     & 3600      & 17.22          & 64.03      & 17.22           & 0.00       & 56.22   \\
		LeukemiaCov    & 3571 & 15     & 3600      & 21.87          & 168.47     & 21.87           & 0.00       & 21.37   \\
		LymphomaCov2   & 4026 & 3      & 210       & 40.62          & 25.73      & 40.62           & 0.00       & 8.16    \\
		LymphomaCov2   & 4026 & 5      & 144       & 63.66          & 26.75      & 63.66           & 0.00       & 5.38    \\
		LymphomaCov2   & 4026 & 10     & 3600      & 78.29          & 293.42     & 69.27           & 11.52      & 12.27   \\
		LymphomaCov2   & 4026 & 15     & 3600      & 93.46          & 1810.19    & 86.20            & 7.77       & 1.99    \\
		ProstateCov    & 6033 & 3      & 102       & 8.19           & 57.35      & 8.19            & 0.00       & 1.78    \\
		ProstateCov    & 6033 & 5      & 3600      & 12.92          & 54.63      & 12.92           & 0.00       & 65.90   \\
		ProstateCov    & 6033 & 10     & 3600      & 24.38          & 407.14     & 24.38           & 0.00       & 8.84    \\
		ProstateCov    & 6033 & 15     & 3600      & 34.98          & 1175.40     & 34.98           & 0.00       & 3.06    \\
		ArceneCov      & 10000 & 3     & 88        & 3.36           & 138.07     & 3.36            & 0.00       & 0.64    \\
		ArceneCov      & 10000 & 5     & 248       & 5.56           & 135.47     & 5.56            & 0.00       & 1.83    \\
		ArceneCov      & 10000 & 10    & 3600      & 10.81          & 138.56     & 10.81           & 0.00       & 25.98   \\
		ArceneCov      & 10000 & 15    & 3600      & 15.30           & 140.92     & 15.30            & 0.00       & 25.55   \\ \bottomrule
	\end{tabular}
\end{table}

\begin{table}[!t]
\centering
\caption{Additional statistics regarding the best solution obtained by \cref{alg:BS cov} in \cref{tab:comparison}. Denote by $\epsilon^\star$ the threshold found that yields the best solution in \cref{alg:BS cov}, $A$ the input covariance matrix, $A^{\epsilon^\star}$ is thresholded matrix obtained by \cref{alg:threshold procedure} with input $(A, \epsilon^\star)$.}
\label{tab:more_stats_BB}
\begin{tabular}{lccccc}
\toprule
Dataset & k & Error (\%) & $\epsilon^\star / \infnorm{A}$ & Zeros in $A^{\epsilon^\star}$ (\%) & Jaccard Index \\
\midrule
CovColonCov   & 3  & 0 & 0.75    & 99.988   & 1.00 \\
CovColonCov   & 5  & 0 & 0.59    & 99.941   & 1.00 \\
CovColonCov   & 10 & 0 & 0.58    & 99.926   & 1.00 \\
CovColonCov   & 15 & 0 & 0.59    & 99.941   & 1.00 \\
LymphomaCov1 & 3  & 0 & 0.50    & 99.982   & 1.00 \\
LymphomaCov1 & 5  & 0 & 0.50    & 99.982   & 1.00 \\
LymphomaCov1 & 10 & 0 & 0.38    & 99.951   & 1.00 \\
LymphomaCov1 & 15 & 0 & 0.35    & 99.940   & 1.00 \\
Reddit1500 & 3 & 0 & 0.13    & 99.999   & 1.00 \\
Reddit1500 & 5 & 0 & 0.06    & 99.997   & 1.00 \\
Reddit1500 & 10 & 0 & 0.06   & 99.997   & 1.00 \\
Reddit1500 & 15 & 0 & 0.06   & 99.997   & 1.00 \\
Reddit2000 & 3 & 0 & 0.13    & 99.999   & 1.00 \\
Reddit2000 & 5 & 0 & 0.09    & 99.998   & 1.00 \\
Reddit2000 & 10 & 0 & 0.09   & 99.998   & 1.00 \\
Reddit2000 & 15 & 0.26 & 0.09 & 99.998   & 0.88 \\
LeukemiaCov    & 3  & 0 & 0.41 & 99.998   & 1.00 \\
LeukemiaCov    & 5  & 0 & 0.50 & 99.999   & 1.00 \\
LeukemiaCov    & 10 & 0 & 0.50 & 99.999   & 1.00 \\
LeukemiaCov    & 15 & 0 & 0.44 & 99.999   & 1.00 \\
LymphomaCov2    & 3  & 0 & 0.50 & 99.999   & 1.00 \\
LymphomaCov2    & 5  & 0 & 0.50 & 99.999   & 1.00 \\
LymphomaCov2    & 10 & 11.52 & 0.42 & 99.999   & 0.42 \\
LymphomaCov2    & 15 & 7.77 & 0.44 & 99.999   & 0.30 \\
ProstateCov    & 3  & 0 & 0.75 & 99.999   & 1.00 \\
ProstateCov    & 5  & 0 & 0.68 & 99.999   & 1.00 \\
ProstateCov    & 10 & 0 & 0.69 & 99.999   & 1.00 \\
ProstateCov    & 15 & 0 & 0.69 & 99.999   & 1.00 \\
ArceneCov   & 3  & 0 & 0.75 & 99.999   & 1.00 \\
ArceneCov   & 5  & 0 & 0.75 & 99.999   & 1.00 \\
ArceneCov   & 10 & 0 & 0.75 & 99.999   & 1.00 \\
ArceneCov   & 15 & 0 & 0.75 & 99.999   & 1.00 \\
\bottomrule
\end{tabular}
\end{table}

\begin{table}[!t]
\centering
\caption{Summary of average approximation errors and average speedup factors for each $k$, when integrated our framework with Branch-and-Bound algorithm.
The standard deviations are reported in the parentheses.}
\label{tab:summary_k}
\begin{tabular}{@{}ccc@{}}
\toprule
k & Avg. Error (\%) (Std. Dev) & Avg. Speedup (Std. Dev) \\ \midrule
3  & 0.00 (0.00) & 13.52 (15.12) \\
5  & 0.00 (0.00) & 153.28 (203.17) \\
10 & 1.47 (4.06) & 111.70 (141.81) \\
15 & 1.05 (2.72) & 123.51 (210.55) \\ \bottomrule
\end{tabular}
\end{table}

From the data presented in \cref{tab:comparison}, it is evident that our framework achieves exceptional performance metrics. 
The median approximation error is 0, with the 90th percentile approximation error not exceeding 0.19\%. 
Additionally, the median speedup factor is recorded at 20.87. 
These statistics further underscore the efficiency and effectiveness of our framework, demonstrating its capability to provide substantial computational speedups while maintaining small approximation errors.

From the data in \cref{tab:more_stats_BB}, we conclude that the Jaccard index is oftentimes one, and the change of Jaccard index aligns with the change of the approximation error. 
Additionally, for each dataset, the relative threshold value $\epsilon^\star / \infnorm{A}$ and the percentage of zeros in $A^{\epsilon^\star}$ remain relatively stable across $k$, as the largest block size $d_0$ is set to 30, and the best solutions are typically found in blocks near this size.

From the data in \cref{tab:summary_k}, it is clear that our framework consistently obtains significant computational speedups across a range of settings. Notably, it achieves an average speedup factor of 13.52 for \( k = 3 \), with this factor increasing dramatically to over 153 for \( k \geq 5 \). 
An observation from the analysis is the relationship between \( k \) and performance metrics: although the speedup factor increases with larger \( k \), there is a corresponding rise in the average approximation errors. 
Specifically, for \( k = 3 \) and \( k = 5 \), the average errors remain exceptionally low, which are all zero, but they escalate to 1.47\% for \( k = 10 \) and 1.05\% for \( k = 15 \). 
This indicates a consistent pattern of increased approximation errors as \( k \) grows, which is aligned with \cref{thm:approximation_alg_gap}, despite the significant gains in speed.

\subsection{Empirical results when integrated with Chan's algorithm}
\label{app:empirical_chan}
In this section, we provide the detailed numerical test results of our framework when integrated with Chan's algorithm (previously summarized in \cref{tab:summary_datasets2}), in \cref{tab:comparison2}.
We provide statistics related to the best solution in \cref{tab:more_stats_Chan}, including the ratio of corresponding $\epsilon^\star$ and $\infnorm{A}$, the percentage of zeros in the matrix $A^{\epsilon^\star}$, and the Jaccard index.
We also summarized the average approximation errors and average speedup factors for with respect to $k$ in \cref{tab:summary_k2}.

\begin{table}[htb]
	\centering
	\caption{Comparison of runtime and objective values on real-world datasets, between Chan's algorithm and \cref{alg:BS cov} integrated with Chan's algorithm. $d$ is the dimension of Sparse PCA problem, and $k$ is the sparsity constant. We note that the objective value for GLI85Cov is scaled by $10^9$ and that for GLABRA180Cov is scaled by $10^8$. The error being negative means that our framework finds a better solution than the standalone algorithm.}
	\label{tab:comparison2}
	\begin{tabular}{@{}lcccccccc@{}}
		\toprule
		Dataset & $d$ & $k$ & \multicolumn{2}{c}{Chan's algorithm} & \multicolumn{2}{c}{Ours} & Error (\%) & Speedup \\
		\cmidrule(lr){4-5} \cmidrule(lr){6-7}
		& & & Time & Objective & Time & Objective & & \\
		\midrule
		ArceneCov       & 10000   & 200   & 228   & 91.7        & 288    & 91.7        & 0.00  & 0.79    \\
		ArceneCov       & 10000   & 500   & 239   & 140.4       & 188    & 140.4       & 0.00  & 1.27    \\
		ArceneCov       & 10000   & 1000  & 249   & 190.4       & 349    & 190.8       & -0.21 & 0.71    \\
		ArceneCov       & 10000   & 2000  & 256   & 233.3       & 377    & 233.5       & -0.09 & 0.68    \\
		GLI85Cov        & 22283   & 200   & 2836  & 35.8        & 1660   & 35.7        & 0.28  & 1.71    \\
		GLI85Cov        & 22283   & 500   & 2728  & 38.8        & 1528   & 39.4        & -1.55 & 1.79    \\
		GLI85Cov        & 22283   & 1000  & 2869  & 39.9        & 1942   & 40.9        & -2.51 & 1.48    \\
		GLI85Cov        & 22283   & 2000  & 2680  & 42.3        & 2034   & 42.5        & -0.47 & 1.32    \\
		GLABRA180Cov    & 49151   & 200   & 35564 & 44.7        & 4890   & 44.6        & 0.11  & 7.27    \\
		GLABRA180Cov    & 49151   & 500   & 33225 & 60.4        & 4475   & 60.7        & -0.50 & 7.42    \\
		GLABRA180Cov    & 49151   & 1000  & 30864 & 70.5        & 4142   & 70.7        & -0.28 & 7.45    \\
		GLABRA180Cov    & 49151   & 2000  & 29675 & 78.1        & 4830   & 78.2        & -0.10 & 6.14    \\
		Dorothea        & 100000  & 200   & 246348 & 3.25       & 16194  & 3.34        & -2.77 & 15.21   \\
		Dorothea        & 100000  & 500   & 246539 & 4.94       & 17554  & 5.07        & -2.63 & 14.04   \\
		Dorothea        & 100000  & 1000  & 244196 & 6.23       & 16934  & 6.38        & -2.41 & 14.42   \\
		Dorothea        & 100000  & 2000  & 255281 & 7.18       & 17774  & 7.28        & -1.39 & 14.36   \\
		\bottomrule
	\end{tabular}
\end{table}

In \cref{tab:comparison2}, we once again demonstrate the substantial speedup improvements our framework achieves when integrated with Chan's algorithm.
The results show a median approximation error of -0.38\%, with the maximum approximation error not exceeding 0.28\%, and a median speedup factor of 3.96.
We observe that the speedup factor grows (linearly) as the dimension of the dataset goes up, highlighting the efficacy of our framework, especially in large datasets.
Notably, in all instances within the DorotheaCov dataset, the instances $k=500,1000,2000$ in GLI85Cov and GLABRA180Cov datasets, the instances $k=1000,2000$ in ArceneCov dataset, our framework attains solutions with negative approximation errors, indicating that it consistently finds better solutions than Chan’s algorithm alone. 
These instances provide tangible proof of our framework's potential to find better solutions, as highlighted in \cref{rmk:bettersolution}.

\begin{table}[ht]
	\centering
	\caption{Additional statistics regarding the best solution obtained by \cref{alg:BS cov} in \cref{tab:comparison2}. Denote by $\epsilon^\star$ the threshold found that yields the best solution in \cref{alg:BS cov}, $A$ the input covariance matrix, $A^{\epsilon^\star}$ is thresholded matrix obtained by \cref{alg:threshold procedure} with input $(A, \epsilon^\star)$.}
	\label{tab:more_stats_Chan}
	\begin{tabular}{lccccc}
		\toprule
		Dataset & k & Error (\%) & $\epsilon^\star / \infnorm{A}$ & Zeros in $A^{\epsilon^\star}$ (\%) & Jaccard Index \\
		\midrule
		ArceneCov   & 200  & 0.00 & 0.38 & 99.974 & 0.82 \\
		ArceneCov   & 500  & 0.00 & 0.26 & 99.868 & 0.82 \\
		ArceneCov   & 1000 & -0.21 & 0.16 & 99.469 & 0.79 \\
		ArceneCov   & 2000 & -0.09 & 0.05 & 95.891 & 0.94 \\
		GLI85Cov       & 200  & 0.28 & 0.09 & 99.997 & 0.67 \\
		GLI85Cov       & 500  & -1.55 & 0.04 & 99.985 & 0.49 \\
		GLI85Cov       & 1000 & -2.51 & 0.02 & 99.962 & 0.62 \\
		GLI85Cov       & 2000 & -0.47 & 0.01 & 99.775 & 0.73 \\
		GLABRA180Cov   & 200  & 0.11 & 0.18 & 99.999 & 0.82 \\
		GLABRA180Cov   & 500  & -0.50 & 0.10 & 99.997 & 0.81 \\
		GLABRA180Cov   & 1000 & -0.28 & 0.05 & 99.990 & 0.82 \\
		GLABRA180Cov   & 2000 & -0.10 & 0.04 & 99.980 & 0.80 \\
		DorotheaCov    & 200  & -2.77 & 0.12 & 99.999 & 0.78 \\
		DorotheaCov    & 500  & -2.63 & 0.09 & 99.999 & 0.79 \\
		DorotheaCov    & 1000 & -2.41 & 0.06 & 99.999 & 0.76 \\
		DorotheaCov    & 2000 & -1.39 & 0.05 & 99.998 & 0.67 \\
		\bottomrule
	\end{tabular}
\end{table}

In \cref{tab:more_stats_Chan}, the Jaccard index does not vary significantly with $k$, nor does its change align with the change of the approximation error. 
A possible reason is that Chan’s algorithm is an approximation algorithm, and the Jaccard index may not fully capture the similarity between the solutions obtained and the true optimal solutions. 
As expected, since $d_0 = 2k$, the relative threshold value $\epsilon^\star / \infnorm{A}$ and the percentage of zeros decrease as $k$ increases. 

\begin{table}[tb]
	\centering
	\caption{Summary of average approximation errors and average speedup factors for each $k$, when integrated our framwork with Chan's algorithm.
		The standard deviations are reported in the parentheses.}
	\label{tab:summary_k2}
	\begin{tabular}{@{}ccc@{}}
		\toprule
		k & Avg. Error (\%) (Std. Dev) & Avg. Speedup (Std. Dev) \\
		\midrule
		200  & -0.60 (1.45) &  6.25 (6.63) \\
		500  & -1.17 (1.17) &  6.13 (5.96) \\
		1000  & -1.35 (1.28) &  6.02 (6.36) \\
		2000  & -0.51 (0.61) &  5.63 (6.31) \\
		\bottomrule
	\end{tabular}
\end{table}

In \cref{tab:summary_k2}, we do not observe an increase in speedup factors as 
$k$ increases. 
This phenomenon can be attributed to the runtime of Chan's algorithm, which is $\mo(d^3)$ and independent of $k$.
Consequently, as predicted by \cref{prop:time_complexity}, the speedup factors do not escalate with larger values of $k$.
Additionally, unlike the trends noted in \cref{tab:summary_k}, the average errors do not increase with $k$.
This deviation is likely due to Chan's algorithm only providing an approximate solution with a certain multiplicative factor:
As shown in \cref{thm:approximation_alg_gap}, our algorithm has the capability to improve this multiplicative factor at the expense of some additive error.
When both $k$ and $d$ are large, the benefits from the improved multiplicative factor tend to balance out the impact of the additive error. Consequently, our framework consistently achieves high-quality solutions that are comparable to those obtained solely through Chan's algorithm, maintaining an average error of less than 0.55\% for each $k$.

Finally, we note that Chan's algorithm is considerably more scalable than the Branch-and-Bound algorithm. 
The statistics underscore our framework's ability not only to enhance the speed of scalable algorithms like Chan's, but also to maintain impressively low approximation errors in the process.

\subsection{Empirical results under \cref{model:bd}}
\label{app:synthetic}

	In this section, we report the numerical results in \cref{model:bd} using \cref{alg:operational} with a threshold obtained by \cref{alg:var est}.  
	In \cref{model:bd}, we set $E$ to have i.i.d.~centered Gaussian variables with a standard deviation $\sigma = 0.1$ in its lower triangle entries, i.e., $E_{ij}$ for $1\le i\le j$, and set $E_{ij} = E_{ji}$ for $i\ne j$ to make it symmetric. 
	We generate 30 independent random blocks in $\widetilde{A}$, each of size 20, with each block defined as $M_i^\top M / 100$, where $M \in \mathbb{R}^{100 \times 20}$ has i.i.d.~standard Gaussian entries (which implies that $u=1$ in \cref{model:bd}).

	We run \cref{alg:var est} with $C = 1$, $\alpha = 0.7$, and $u = 1$, obtaining the threshold $\bar{\epsilon}$ as the output. 
	Subsequently, we execute \cref{alg:operational} with $\widetilde{A}$, $k \in \{2, 3, 5, 7, 10\}$, the Branch-and-Bound algorithm, and $\bar{\epsilon}$. 
	We denote the solution output by \cref{alg:operational} as $x_{\textup{Ours}}$.

	In \cref{tab:summary_synthetic_k}, we compare \cref{alg:operational} integrated with Branch-and-Bound algorithm, with vanilla Branch-and-Bound algorithm. 
	We report the optimality gap, speed up factor, and the value of $\bar \epsilon$ outputted by \cref{alg:var est}. The optimality gap is defined as:
	\begin{align*}
		\text{Gap} \ldef \text{Obj}_{\textup{BB}} - \text{Obj}_{\textup{Ours}},
	\end{align*}
	where $\text{Obj}_{\textup{BB}} \ldef x_{\textup{BB}}^\top \widetilde A x_{\textup{BB}}$, and $x_{\textup{BB}}$ is the output of the Branch-and-Bound algorithm with input $(\widetilde A, k)$, and where $\text{Obj}_{\textup{Ours}}$ is the objective value $y^\top \widetilde A y$ in \cref{prop:inf_norm_bound_of_E}.

	To ensure reproducibility, we set the random seed to 42 and run the experiments ten times for each $k$.

\begin{table}[ht]
	\centering
	\caption{Summary of average optimality gaps, average speedup factors, and average threshold $\bar \epsilon$ for each $k$, when integrated our framework with Branch-and-Bound algorithm.
	The standard deviations are reported in the parentheses. The time limit for Branch-and-Bound is set to 600 seconds.}
	\label{tab:summary_synthetic_k}
	\begin{tabular}{@{}cccc@{}}
		\toprule
		k & Avg. Gap (Std. Dev) & Avg. Spdup (Std. Dev)  & Avg. $\bar\epsilon$ (Std. Dev) \\ \midrule
		2  & 0.29 (0.08) & 16.52 (12.18)  & 0.96 (0.0016) \\
		3  & 0.35 (0.13) & 94.78 (78.52)  & 0.96 (0.0030) \\
		5  & 0.48 (0.12) & 2010.02 (625.55) & 0.96 (0.0028) \\
		7  & 0.59 (0.14) & 1629.38 (869.91)  & 0.96 (0.0040) \\
		10 & 0.64 (0.14) & 1900.05 (598.14)  & 0.96 (0.0036) \\ \bottomrule
	\end{tabular}
\end{table}

	From \cref{tab:summary_synthetic_k}, we observe that the average optimality gap increases as $k$ grows. 
	Additionally, $\bar{\epsilon}$ remains relatively stable across different values of $k$, as the calculation of $\bar \epsilon$ does not depend on $k$ at all. 
	The optimality gap is much smaller than the predicted bound $3k \cdot \bar{\epsilon}$, providing computational verification of the bound proposed in \cref{prop:inf_norm_bound_of_E}. 
	The speedup factor is exceptionally high, often exceeding a thousand when $k \ge 5$.

\subsection{Impacts of parameters $d_0$ and $\delta$ in \cref{alg:BS cov}}
\label{app:parameters}
In this section, we discuss the impacts of parameters $d_0$ and $\delta$ in \cref{alg:BS cov}.
We showcase the impacts by comparing the approximation errors and speedups on certain datasets when integrating \cref{alg:BS cov} with Branch-and-Bound algorithm. 
\subsubsection{Impact of $d_0$}
\label{app:impact_d0}
In \cref{tab:impact_d0}, we summarize the numerical results when increasing $d_0$ from 30 to 45 and 55 in the LymphomaCov2 dataset.
We observe a significant improvement on the approximation error, going from an average of 4.82\% to 1.94\% and 0.03\%, at the cost of lowering the average speedup factor from 6.95 to around 4.
\begin{table}[ht]
\centering
\caption{Comparison of errors and speedups for different $d_0$ values across LymphomaCov2 dataset.}
\label{tab:impact_d0}
\begin{tabular}{@{}lcccccccc@{}}
\toprule
Dataset & $d$ & $k$ & \multicolumn{2}{c}{$d_0=30$} & \multicolumn{2}{c}{$d_0=45$} & \multicolumn{2}{c}{$d_0=55$} \\ \cmidrule(lr){4-5} \cmidrule(lr){6-7} \cmidrule(lr){8-9}
        &     &     & Error (\%) & Speedup & Error (\%) & Speedup & Error (\%) & Speedup \\
\midrule
LymphomaCov2 & 4026 & 3  & 0.0  & 8.16 & 0.0  & 8.29 & 0.0  & 8.62 \\
LymphomaCov3 & 4026 & 5  & 0.0  & 5.38 & 0.0  & 5.35 & 0.0  & 5.95 \\
LymphomaCov4 & 4026 & 10 & 11.52 & 12.27 & 0.0  & 1.84 & 0.0  & 1.00 \\
LymphomaCov5 & 4026 & 15 & 7.77  & 1.99 & 7.77 & 0.92 & 0.13 & 1.00 \\
Overall      &       &     & 4.82 & 6.95 & 1.94 & 4.10 & 0.03 & 4.14 \\
\bottomrule
\end{tabular}
\end{table}
\subsubsection{Impact of $\delta$}
\label{app:impact_delta}
In \cref{tab:impact_delta}, we summarize the numerical results when increasing $\delta$ from $0.01\cdot \infnorm{A}$ to $0.1\cdot \infnorm{A}$ and $0.15\cdot \infnorm{A}$ in the ProstateCov dataset.
We observe that, by increasing $\delta$, the speedup factors dramatically improve, from an average of $19.90$ to $38.41$ and $99.49$, at the potential cost of lowering the approximation error from 0 to 6.07\%.
\begin{table}[ht]
\centering
\caption{Comparison of errors and speedups for different $\delta$ values across ProstateCov dataset.}
\label{tab:impact_delta}
\begin{tabular}{@{}lcccccccc@{}}
\toprule
Dataset & $d$ & $k$ & \multicolumn{2}{c}{$\delta=0.01\cdot \infnorm{A}$} & \multicolumn{2}{c}{$\delta=0.1\cdot \infnorm{A}$} & \multicolumn{2}{c}{$\delta=0.15\cdot \infnorm{A}$} \\ \cmidrule(lr){4-5} \cmidrule(lr){6-7} \cmidrule(lr){8-9}
        &     &     & Error (\%) & Speedup & Error (\%) & Speedup & Error (\%) & Speedup \\
\midrule
ProstateCov & 6033 & 3  & 0.0  & 1.78  & 0.0  & 2.79  & 0.0  & 3.62  \\
ProstateCov & 6033 & 5  & 0.0  & 65.90 & 0.0  & 106.32 & 0.0  & 128.16 \\
ProstateCov & 6033 & 10 & 0.0  & 8.84  & 0.0  & 29.56  & 0.0  & 128.25 \\
ProstateCov & 6033 & 15 & 0.0  & 3.06  & 0.0  & 14.99  & 24.27 & 137.93 \\
Overall     &       &     & 0.0  & 19.90 & 0.0  & 38.41 & 6.07  & 99.49 \\
\bottomrule
\end{tabular}
\end{table}

\section{Extension to non-positive semidefinite input matrix}
In this paper, we focus on solving \ref{prob SPCA}, where the input matrix $A$ is symmetric and positive semidefinite, following conventions in the Sparse PCA literature~\citep{chan2015worst,AmiWai08,berk2019certifiably,dey2022using,dey2022solving}. 
We remark that, our framework remains effective for symmetric and non-positive semidefinite inputs, provided that the algorithm $\mathcal{A}$ for (approximately) solving \ref{prob SPCA} supports symmetric and non-positive semidefinite inputs.\footnote{Such algorithms do exist; for example, a naive exact algorithm that finds an eigenvector corresponding to the largest eigenvalue of submatrices $A_{S,S}$ for all subsets $S$ with $|S| \leq k$.} 
In other words, if $\mathcal{A}$ is compatible with symmetric and non-positive semidefinite inputs, the assumption of $A$ being positive semidefinite can be removed in \cref{thm:approximation_alg_gap,thm:binary_search} and in \cref{prop:robust_stat} (stated in \cref{model:bd}), and all theoretical guarantees still hold.

\end{document}